\newcommand{\parents}{Pa}
\DeclareMathOperator{\diag}{diag}
\DeclareMathOperator{\argmin}{argmin}
\newcommand{\T}{{\rm T}}
\newcommand{\wht}{\widehat}
\newcommand{\wtd}{\widetilde}
\newcommand{\bsmat}{\left[\begin{smallmatrix}}
\newcommand{\esmat}{\end{smallmatrix}\right]}
\newtheorem{theorem}{Theorem}[section]
\newtheorem{lemma}[theorem]{Lemma}
\newtheorem{proposition}{Proposition}
\newtheorem{remark}{Remark}
\newcommand{\E}{\mathbb{E}}
\newcommand{\R}{\mathbb{R}}
\def\rvw{{\mathbf{w}}}
\def\rvx{{\mathbf{x}}}
\def\vi{{\bm{i}}}
\def\vj{{\bm{j}}}
\def\vn{{\bm{n}}}
\def\vu{{\bm{u}}}
\def\vv{{\bm{v}}}
\def\vw{{\bm{w}}}
\def\vx{{\bm{x}}}
\def\vy{{\bm{y}}}
\def\vz{{\bm{z}}}
\def\mA{{\bm{A}}}
\def\mC{{\bm{C}}}
\def\mE{{\bm{E}}}
\def\mF{{\bm{F}}}
\def\mG{{\bm{G}}}
\def\mI{{\bm{I}}}
\def\mL{{\bm{L}}}
\def\mN{{\bm{N}}}
\def\mP{{\bm{P}}}
\def\mS{{\bm{S}}}
\def\mT{{\bm{T}}}
\def\mU{{\bm{U}}}
\def\mV{{\bm{V}}}
\def\mW{{\bm{W}}}
\def\mX{{\bm{X}}}
\def\mSigma{{\bm{\Sigma}}}
\DeclareMathAlphabet{\mathsfit}{\encodingdefault}{\sfdefault}{m}{sl}
\SetMathAlphabet{\mathsfit}{bold}{\encodingdefault}{\sfdefault}{bx}{n}
\def\gA{{\mathcal{A}}}
\def\gG{{\mathcal{G}}}
\def\gI{{\mathcal{I}}}
\def\gL{{\mathcal{L}}}
\def\gM{{\mathcal{M}}}
\def\gO{{\mathcal{O}}}
\def\emN{{N}}
\def\emX{{X}}
\begin{document}

\title{\bf Recovering Linear Causal Models with Latent Variables\\ via  Cholesky Factorization of Covariance Matrix}

\author{\vspace{0.3in}\\
  \textbf{Yunfeng Cai, Xu Li, Minging Sun, Ping Li} \\\\
  Cognitive Computing Lab\\
  Baidu Research\\
No.10 Xibeiwang East Road, Beijing 100193, China\\
  10900 NE 8th St. Bellevue, Washington 98004, USA\\\\
  \texttt{ \{yunfengcai09, dongfangyixi,  sunmingming01, pingli98\}@gmail.com}
}
\date{\vspace{0.2in}}
\maketitle

\begin{abstract}\vspace{0.2in}
\noindent Discovering the causal relationship via recovering the directed acyclic graph (DAG) structure from the observed data is a well-known challenging combinatorial problem. When there are latent variables, the problem becomes even more difficult.
In this paper, we first propose a DAG structure recovering algorithm,
which is based on the Cholesky factorization of the covariance matrix of the observed data.
The algorithm is fast and easy to implement
and has theoretical grantees for exact recovery.
On synthetic and real-world datasets,
the algorithm is significantly faster than previous methods and achieves the state-of-the-art performance.
Furthermore, under the equal error variances assumption, we incorporate an optimization procedure into the Cholesky factorization based algorithm to handle the DAG recovering problem with latent variables.
Numerical simulations show that the modified ``Cholesky + optimization'' algorithm is able to
recover the ground truth graph in most cases and outperforms existing algorithms.\\


\end{abstract}

\newpage

\section{Introduction}\label{introduction}

Learning the causal relationships is a fundamental
problem and has many applications in biology, machine learning, medicine, and economics.
However, in many cases, not all causal variables relevant to the observed features have been measured. For example, in healthcare
domains, there may be numerous unobserved factors such
as gene expression; In financial markets, stock returns may
be causally related but may also be confounded or mediated by a complicated network of unmeasured
economic and political factors; Self-reported family history and diets
may also leave out some important information.

Most causal discovery approaches focus on the situation without latent variables. For example,
search-based algorithms~\citep{Chickering2002optimal,friedman2003being,teyssier2005ordering,aragam2015concave,
ramsey2017million,
tsamardinos2006max,lv2021bic,
ye2021optimizing} generally adopt a score (e.g., BIC score~\citep{peters2014causal}, Cholesky score~\citep{ye2021optimizing},  remove-fill score~\citep{squires2020efficient}, Clustering Information Criterion (CIC)~\citep{niu2022learning}) to measure the fitness of graphs over data and then search over the legal DAG (directed acyclic graph) space to find the structure that achieves the highest score. However, exhaustive search over the legal DAG space is infeasible when $p$ is large (e.g., there are $4.1 \mathrm{e}^{18}$ DAGs for $p = 10$~\citep{sloane2003line}).
Those algorithms go in quest of a trade-off between performance and time complexity.

Topology order search algorithms~\citep{ghoshal2017learning,
ghoshal2018learning,chen2019causal,gao2020polynomial,
park2020identifiability} decompose the DAG learning problem into two phases: (i) Topology order learning via conditional variance of the observed data; (ii) Graph estimation depends on the learned topology order.
Those algorithms reduce the computation complexity into polynomial time and are guaranteed to recover the DAG structure under proper assumptions.
Since Zheng {\em et al.}~\citep{zheng2018dags} proposed an approach that converts the traditional combinatorial optimization problem into a continuous program, many methods~\citep{ng2019graph,yu2019dag,lachapelle2020gradient,lee2020scaling,squires2020efficient,zheng2020learning,
zhu2021efficient,ng2022masked} have been proposed.
To overcome the constraints of conventional methodologies, researchers have introduced algorithms that leverage reinforcement learning~\citep{zhu2020causal, wang2021ordering} or flow-based techniques~\citep{ren2021causal, ren2022causal, ren2022variational, ren2022flow}, aiming to mitigate the stringent assumptions inherent in traditional approaches.

Causal discovery with latent variables is less studied.
The FCI~\citep{spirtes1999algorithm}, RFCI~\citep{colombo2012learning}, and ICD~\citep{rohekar2021iterative} methods can distinguish the observed variables and the latent variables that are confounders of the two observed ones and can recover up to a partial ancestral graph (PAG), which is an equivalence class of the true causal graph, even in the situation that the causal graph is able to exactly recovered.
Some methods focus on special topology cases for non-Gaussian models, e.g.,~\citep{spearman1928pearson,hoyer2008estimation,shimizu2009estimation,
tashiro2014parcelingam,cai2019triad}. Other works~\citep{salehkaleybar2020learning,xie2020generalized} relax the special cases to more general non-Gaussian partially observable DAGs.

\vspace{0.1in}
\noindent\textbf{Contributions:}
This paper studies the linear causal models via Cholesky factorization:
\begin{enumerate}
\item
For the case when all variables are observed,
we propose a Cholesky factorization based method (namely, CDCF) to discover the causal structure.
The time complexity of CDCF is $O(p^3)$,
which is the fastest method so far.
Here $p$ is the number of nodes.
\item We show that CDCF is able to recover the DAG exactly, under standard assumptions.
Sample complexity can be also obtained.  We propose a novel algorithm (namely, CDCF$^+$) to deal with the linear causal models with latent variables.
\item  Numerical simulations show that  CDCF and CDCF$^+$ achieve the state-of-the-art performance.
\end{enumerate}

\newpage

\noindent\textbf{Organization}. Section~\ref{sec:sem} introduces the algorithm for linear causal models with exact recovery analysis. Section~\ref{sec:latent} gives the algorithm for the linear causal models with latent variables.
Numerical  results are provided in Section~\ref{sec:exp}. Finally, Section~\ref{sec:conclusion} concludes the paper.

\vspace{0.15in}
\noindent{\bf Notations.}\; The symbol $\|\cdot\|$ stands for the Euclidean norm of a vector or the spectral norm of a matrix.
For a matrix $\mA\in\R^{m\times n}$,
$\mA_{ij}$, $\mA_{i,:}$, $\mA_{:,j}$ stand for the $(i,j)$ entry, the $i$th row and $j$th column of $\mA$, respectively.
Let $\underline{\vi}=[i_1,\dots,i_k]$, $\underline{\vj}=[j_1,\dots,j_{\ell}]$ be subsets of $[1,\dots,m]$, $[1,\dots,n]$, respectively.
$\mA_{\underline{\vi},:}$, $\mA_{:,\underline{\vj}}$, $\mA_{\underline{\vi},\underline{\vj}}$
stand for the sub-matrices of $\mA$ consisting of
all rows in $\underline{\vi}$, columns in $\underline{\vj}$, and the intersection of rows in $\underline{\vi}$ and columns in $\underline{\vj}$, respectively.
The symbol $\|\mA\|_1$, $\|\mA\|_{\max}$ $\|\mA\|_{2,\infty}$ stand for $\ell_1$-norm, max-norm and 2-infinity norm, respectively,
i.e., $\|\mA\|_1=\sum_{i,j}|\mA_{ij}|$,
$\|\mA\|_{\max}=\max_{i,j}|\mA_{ij}|$,
$\|\mA\|_{2,\infty}=\max_i\|\mA_{i,:}\|$.

\section{Recover Linear Causal Models via Cholesky Factorization}\label{sec:sem}
\vspace{-0.05in}

\subsection{Preliminaries and Assumptions}\label{sec:preliminaries}

Assume that the observed data is entailed by a DAG $\gG = (V, E)$, where $V$, $E$ are the sets of nodes and edges, respectively.
Each node $v_i$ corresponds to a random variable $\emX_i$.
The data matrix is given by $\mX = [\rvx_1,...,\rvx_p] \in \R^{n \times p}$, where $p=|V|$, $\rvx_i$ is consisted of $n$ i.i.d observations of the random variable $\emX_i$, for $k=1,\dots,p$.
The joint distribution of $\mX$ is $P(\mX) = \prod_{i=1}^p P(\emX_i| \parents_\gG(\emX_i))$, where $\parents_\gG(\emX_i)
:=\{\emX_j | (v_i, v_j) \in E\}$ is the set of parents of node $\emX_i$.

Given $\mX$, we seek to recover the latent DAG topology structure for the joint probability distribution~\citep{hoyer2008nonlinear,peters2017elements}.
Generally, $\mX$ is modeled via a structural equation model (SEM) with the form
\begin{equation*}
\emX_i = f_i(\parents_\gG(\emX_i)) + \emN_i, \quad  i=1,...,p,
\end{equation*}
where $f_i$ is an arbitrary function representing the relation between $\emX_i$ and its parents, $\emN_i$ is the jointly independent noise variable.

In this paper, we focus on the linear SEM defined by
\begin{equation}\label{model}
\emX_i = [\emX_1,\dots,\emX_p]\vw_i + \emN_i, \quad i=1,...,p,
\end{equation}
where $\rvw_i \in \R^p$ is a weighted column vector. Let
$\mW=[\vw_1,\dots,\vw_p] \in \mathbb{R}^{p \times p}$ be the weighted adjacency matrix,
$\mN = [\vn_1,\dots,\vn_p]\in\R^{n\times p}$ be the noise matrix,  where $\vn_i$ is $n$ i.i.d observations of the noise variable $\emN_i$. The linear SEM model \eqref{model} can be rewritten as
\begin{equation}\label{eq:matrix_model}
\mX = \mX \mW + \mN.
\end{equation}
As proposed in~\citet{nicholson1975matrices,mckay2004acyclic},
a graph is DAG if and only if the corresponding weighted adjacent matrix $\mW$ can be decomposed into
\begin{equation}\label{wptp}
\mW =\mP \mT \mP^{\T},
\end{equation}
where $\mP$ is a permutation matrix, $\mT$ is a strict upper triangular matrix, i.e., $\mT_{ij}=0$ for all $i\ge j$.


\vspace{0.1in}
\noindent{\bf Layer Decomposition}\;
The {\em layer decomposition}~\citep{gao2020polynomial} of a DAG can be obtained as follows: let $\gL_0:=\emptyset$, $\gA_j:=\cup_{m=0}^j \gL_j$;
for $j>0$, $\gL_j$ is the set of all source nodes
in the subgraph formed by removing the nodes in $\gA_{j-1}$.

\newpage
\begin{figure}[h]
\centering
\includegraphics[width=2.5in]{./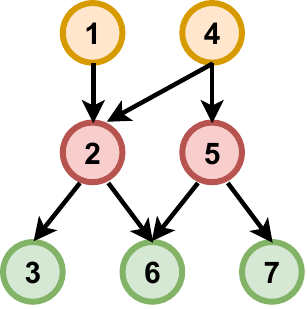}
\caption{Three-layer~DAG}
\label{fig_1}
\end{figure}

For example, the graph in Figure~\ref{fig_1} has three layers with $\gL_1=\{node 1, node 4\}$,
$\gL_2=\{node 2, node 5\}$,
$\gL_3=\{node 3, node 6, node 7\}$.
Furthermore, the adjacent matrix $\mW$ can be given by
\begin{equation*}
\mW=\begin{bmatrix} 0 & 1 & 0 & 0 & 0 & 0 & 0\\
0 & 0 & 1 & 0 & 0 & 1 & 0\\
0 & 0 & 0 & 0 & 0 & 0 & 0\\
0 & 1 & 0 & 0 & 1 & 0 & 0\\
0 & 0 & 0 & 0 & 0 & 1 & 1\\
0 & 0 & 0 & 0 & 0 & 0 & 0\\
0 & 0 & 0 & 0 & 0 & 0 & 0\end{bmatrix}.
\end{equation*}
Let
\begin{equation*}
\mP=\begin{bmatrix} 1 & 0 & 0 & 0 & 0 & 0 & 0\\
0 & 0 & 1 & 0 & 0 & 0 & 0\\
0 & 0 & 0 & 0 & 1 & 0 & 0\\
0 & 1 & 0 & 0 & 0 & 0 & 0\\
0 & 0 & 0 & 1 & 0 & 0 & 0\\
0 & 0 & 0 & 0 & 0 & 1 & 0\\
0 & 0 & 0 & 0 & 0 & 0 & 1\end{bmatrix},
\qquad
\mT=\begin{bmatrix} 0 & 0 & 1 & 0 & 0 & 0 & 0\\
0 & 0 & 1 & 1 & 0 & 0 & 0\\
0 & 0 & 0 & 0 & 1 & 1 & 0\\
0 & 0 & 0 & 0 & 0 & 1 & 1\\
0 & 0 & 0 & 0 & 0 & 0 & 0\\
0 & 0 & 0 & 0 & 0 & 0 & 0\\
0 & 0 & 0 & 0 & 0 & 0 & 0\end{bmatrix},
\end{equation*}
\eqref{wptp} holds.
Note that
$\mT$ is a strict {block} upper triangular matrix.
For a general $r$-layer DAG, there exists a permutation matrix $\mP$ such that \eqref{wptp} holds with $\mT$ having a strict block upper triangular form:
\begin{equation}\label{tblk}
\mT=[\mT_{ij}]
=\begin{bmatrix}
0 & \mT_{12} & \mT_{13} &\dots & \mT_{1r}\\
0 & 0 & \mT_{23} & \dots & \mT_{2r}\\
0 & 0 &  & \ddots & \vdots\\
\vdots &  & \ddots &  & \mT_{r-1,r}\\
0 & \dots & \dots & 0 & 0
\end{bmatrix},
\end{equation}
where $\mT_{ij}\in\R^{p_i\times p_j} = 0$ for $1\le i\le j\le r$, $p_i$ equals the number of nodes in $\gL_i$ for $1\le i\le r$, each column of $\mT_{i,i+1}$ is nonzero (otherwise, the corresponding node belongs to the $i$th layer).
The decomposition \eqref{wptp} with $\mT$ having the strict block upper triangular form \eqref{tblk}
is {\em unique} up to all within layer permutations (the Markov equivalence class).

Throughout the rest of this paper, we will make the following assumptions:

\vspace{0.1in}

\noindent{\bf A1} The noise variables are all centered, uncorrelated, and have finite variances, i.e.,
\begin{equation*}
\E(\emN_i)=0,\;
\mSigma_{nn}=[\E(\emN_i\emN_j)]=\diag(\sigma_1^2,\dots,\sigma_p^2)<\infty.
\end{equation*}

\noindent{\bf A2}
Let the weighted adjacency matrix $\mW$ be decomposed as in \eqref{wptp} with $\mT$ having the block diagonal form \eqref{tblk}.
Assume
\begin{equation*}
\Delta=\min_{\substack{j<k\\  i_s\in\gI_j,i_t\in\gI_k}}
\sigma_{i_t}^2-\sigma_{i_s}^2 +
\|[\mU]_{i_s:i_t-1,i_t}\|^2>0,
\end{equation*}
where
\begin{subequations}
\begin{align}
\underline{\vi}&=[i_1,\dots,i_p]=[1,\dots,p]\mP,\label{vi}\\
\gI_j&=\{\sum_{i=1}^{j-1} p_i+1,\dots,\sum_{i=1}^{j} p_i\}\quad \mbox{for}\; 1\le j\le r,\label{Ij}\\
\mU &= \diag(\sigma_{i_1},\dots,\sigma_{i_p}) (\mI - \mT)^{-1}.\label{mU}
\end{align}
\end{subequations}


Assumption {\bf A2} is essentially the same as the identifiability condition in~\citet[Thm. 2]{park2020identifiability} and~\citet[Thm. B2]{gao2020polynomial}.

In the rest of this section, we assume that  there are no latent variables.
The case when latent variables exist will be discussed in Section~\ref{sec:latent}.

\subsection{Algorithm}\label{sec:alg1}
It follows from \eqref{eq:matrix_model}, \eqref{wptp} and \eqref{vi} that
\begin{equation}\label{eq:PX}
[\emX_{i_1},\dots,\emX_{i_p}] = [\emN_{i_1},\dots,\emN_{i_p}] (\mI-\mT)^{-1}.
\end{equation}
Using \eqref{eq:PX} and {\bf A1}, we have
\begin{equation}
[\mSigma_{xx}]_{\underline{\vi},\underline{\vi}}
=(\mI - \mT)^{-\T}[\mSigma_{nn}]_{\underline{\vi},\underline{\vi}} (\mI - \mT)^{-1}
=\mU^{\T}\mU,
\label{chol}
\end{equation}
where $\mU$ is defined in \eqref{mU}.
Note that $\mU$ is upper triangular,
hence \eqref{chol} is simply
the Cholesky factorization of the permuted covariance matrix $[\mSigma_{xx}]_{\underline{\vi},\underline{\vi}}$.
The task becomes to find the permutation $\underline{\vi}=[i_1,\dots,i_p]$ and an upper  triangular matrix $\wtd\mU$ such that
$\wtd\mU^{\T}\wtd\mU$ is a good approximation of
$[\mSigma_{xx}]_{\underline{\vi},\underline{\vi}}$.

Let $\wtd\mSigma_{xx}$ be an estimator for the population covariance matrix $\mSigma_{xx}$.
Assume that $\underline{\vi}_{k-1}:=[i_1,\dots,i_{k-1}]$ and $\wtd\mU_{k-1}:=\wtd\mU_{1:k-1,1:k-1}$ are settled,
and we have
\begin{equation}
[\wtd\mSigma_{xx}]_{\underline{\vi}_{k-1},\underline{\vi}_{k-1}}
= \wtd\mU_{k-1}^{\T}\wtd\mU_{k-1}.\label{ck1}
\end{equation}
Next, we show how to find $i_k$ and $\wtd\mU_k$.
For the time being, let us assume $i_k$ is known, we show how to compute the last column of $\wtd\mU_k$.
Let $\wtd\mU_k= \begin{bmatrix} \wtd\mU_{k-1} & \bm\beta_k \\ 0 & \alpha_k \end{bmatrix}$ with $\alpha_k\in\R$, $\bm\beta_k\in\R^{k-1}$.
Then using \eqref{ck1}, we get
\begin{equation*}
[\wtd\mSigma_{xx}]_{\underline{\vi}_k,\underline{\vi}_k}=
\wtd\mU_k^{\T} \wtd\mU_k
= \begin{bmatrix} \wtd\mU_{k-1}^{\T}\wtd\mU_{k-1} & \wtd\mU_{k-1}^{\T}\bm\beta_k \\ \bm\beta_k^{\T}\wtd\mU_{k-1} & \alpha_k^2+\|\bm\beta_k\|^2 \end{bmatrix}.
\end{equation*}
It follows immediately that
\begin{equation}\label{eq:cholesky_diag}
\bm\beta_k = \wtd\mU_{k-1}^{-\T} [\wtd\mSigma_{xx}]_{\underline{\vi}_{k-1},i_k},\quad
\alpha_k = \sqrt{[\wtd\mSigma_{xx}]_{i_k,i_k}  - \|\bm\beta_k\|^2}.
\end{equation}
By \eqref{eq:cholesky_diag}, once $i_k$ is settled, we can obtain the last column of $\wtd\mU_k$.
The task remains to select $i_k$ from $\{1,\dots,p\}\setminus\{i_1,\dots,i_{k-1}\}$.
In this paper, we compute $\alpha_j$ and $\bm\beta_j$ by \eqref{eq:cholesky_diag}
for all possible $j$ ($i_j\in\{1,\dots,p\}\setminus\{i_1,\dots,i_{k-1}\}$).
Then determine $i_k$ by
\begin{equation}\label{ik}
i_k=\argmin_{k\le j \le p} \alpha_j^2.
\end{equation}
The intuition behind \eqref{ik} is that $\alpha_j^2$ is the empirical conditional variance of $\emX_j$, thus, $\mX_{i_k}$ is chosen as the one with the smallest conditional variance. One may also consider some alternatives, say choose $i_k$ via the sparsity of $\wtd\mU_{k-1}^{-1}\bm\beta_k$, or take both the sparsity and the conditional variance into account.


\begin{algorithm}[t!]
\caption{Causal Discovery via Cholesky Factorization (CDCF)}
\begin{algorithmic}[1]
\State {\bfseries input}: The covariance matrix $\wtd\mSigma_{xx}\in\R^{p\times p}$.

\State {\bfseries output}: A permutation $\wtd\mP$
and an upper triangular matrix $\wtd\mU_p^{-1}$.

\State Set $\underline{\vi} = [1,2,\dots,p]$;

\State
$\ell = \argmin_{1\le j\le p} [\wtd\mSigma_{xx}]_{i_j,i_j}$,
exchange $i_1$ and $i_\ell$ in $\underline{\vi}$;
\State Set $\alpha_1 = \sqrt{[\wtd\mSigma_{xx}]_{i_1,i_1}}$,
$\wtd\mU_1=\alpha_1$,
$\wtd\mU_1^{-1}=\frac{1}{\alpha_1}$;

\For{$k=2,3,\dots, p$}
\For{$j=k,k+1,\dots,p$}
\State
Compute $\alpha_j$, $\bm\beta_j$ via \eqref{eq:cholesky_diag};
\EndFor
\State $\ell=\argmin_{k\le j \le p} \alpha_j^2$, exchange $i_k$ and $i_{\ell}$ in $\underline{\vi}$;

\State $\wtd\mU_k = \begin{bmatrix} \wtd\mU_{k-1} & \bm\beta_{\ell}\\  0 & \alpha_{\ell}\end{bmatrix}$,
$\wtd\mU_k^{-1} = \begin{bmatrix} \wtd\mU_{k-1}^{-1} & -\frac{1}{\alpha_{\ell}}\wtd\mU_{k-1}^{-1}\bm\beta_{\ell}\\  0 & \frac{1}{\alpha_{\ell}}\end{bmatrix}$;
\EndFor
\State Set $\wtd\mP=\mI(:,\underline{\vi})$, $\mI$ is the order $p$ identity matrix.
\end{algorithmic}
\label{alg1}
\end{algorithm}

The overall algorithm is summarized in Algorithm~\ref{alg1}.
Recall \eqref{mU} and denote $\mA=\diag(\alpha_1,\dots,\alpha_p)$,
we can obtain the adjacent matrix $\wtd\mW$ as follows:
\begin{equation}\label{adj}
\wtd\mW = [\textsc{triu}(\textsc{truncate}(\wtd\mU_p^{-1}\mA,\omega))]_{\textsc{rev}(\underline{\vi}),\textsc{rev}(\underline{\vi})},
\end{equation}
that is, we truncate $\wtd\mU_p^{-1}\mA$ element-wisely,
then take its strict upper triangular part (denoted by ``\textsc{triu}'') and re-permute the predicted adjacent matrix back to the original order according to the permutation order $\underline{\vi}$.
Here $\omega>0$ is a truncate threshold:
a number less than $\omega$ in absolute value is truncated to zero; otherwise, it remains unchanged.

\vspace{0.1in}
\noindent{\bf Time Complexity.}
On line 8, we only need to compute the last entry of $\bm\beta_j$ at the cost of $\mathcal{O}(p)$ at worst since the other entries are available from the previous steps. Therefore, the overall time complexity of Algorithm~\ref{alg1} is $\mathcal{O}(p^3)$.
Additionally, the inner loop (lines 7 to 9) can be made in parallel, which makes the algorithm friendly to run on GPU and suitable for large scale calculations.
It is worth mentioning here that the time complexity for the computation of $\wtd\mSigma_{xx}$ is not included here
since we take it as an overhead of CDCF.

\begin{remark}
Algorithm~\ref{alg1} differs from other methods
(e.g.,~\citet{ghoshal2017learning,
ghoshal2018learning,chen2019causal,gao2020polynomial,
park2020identifiability})
in that, it combines the topology order search stage and the graph estimation stage together, rather than two-stage: first topology order search then graph estimation,
which leads to the reduction of the time complexity.
\end{remark}

\subsection{Exact Recovery}
In this section, we show that Algorithm~\ref{alg1} is able to exactly recover the ordering and the graph structure. All proofs are given in the appendix.

\begin{theorem}[Exact recovery of the ordering]\label{the:recover}
For the linear SEM model \eqref{model},
assume \eqref{wptp}, {\bf A1} and {\bf A2}.
Let $\wtd\mSigma_{xx}$ be an estimator for $\mSigma_{xx}=[\E(\emX_j\emX_k)]$ and
$\max_i|[\mSigma_{xx}]_{ii}-[\wtd\mSigma_{xx}]_{ii}|\le \epsilon_d$
for some $\epsilon_d>0$.
If
$\epsilon_d < \frac{\Delta}{4}$,
Algorithm~\ref{alg1} recovers the ordering exactly.
\end{theorem}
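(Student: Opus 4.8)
The plan is to track the algorithm step by step and show that at each stage $k$ it picks an index $i_k$ that lies in the correct layer. The key quantity is the squared diagonal entry $\alpha_j^2 = [\wtd\mSigma_{xx}]_{i_j,i_j} - \|\bm\beta_j\|^2$, which (in the population case) equals the conditional variance of $\emX_{i_j}$ given the variables $\emX_{i_1},\dots,\emX_{i_{k-1}}$ already selected. First I would write out what $\alpha_j^2$ equals in the \emph{population} case: using the Cholesky identity \eqref{chol}, if $\{i_1,\dots,i_{k-1}\}$ is a union of complete layers (an ``ancestral set'' $\gA_{j-1}$ in the layer decomposition), then for a node $i_t$ in layer $\gL_j$ the population value of $\alpha_{i_t}^2$ is exactly $\sigma_{i_t}^2$, because all parents of $i_t$ are already selected, so conditioning removes the entire linear part and leaves only the noise variance. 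For a node $i_s$ in a strictly later layer $\gL_{j'}$ with $j' > j$, some of its ancestors are not yet selected, so the population value of $\alpha_{i_s}^2$ is $\sigma_{i_s}^2 + \|[\mU]_{\text{(unselected ancestors)},i_s}\|^2$, which by Assumption {\bf A2} exceeds $\sigma_{i_t}^2$ by at least $\Delta$. This is the gap that drives everything.

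Next I would set up the induction on $k$. The inductive hypothesis is that after step $k-1$ the selected set $\underline{\vi}_{k-1}$ is ``layer-consistent'': it consists of all of layers $\gL_1,\dots,\gL_{j-1}$ plus some subset of $\gL_j$ (for the appropriate $j$), and that \eqref{ck1} holds with $\wtd\mU_{k-1}$ the genuine Cholesky factor of $[\wtd\mSigma_{xx}]_{\underline{\vi}_{k-1},\underline{\vi}_{k-1}}$. Under this hypothesis I need to show that the $\argmin$ in \eqref{ik} selects an index still lying in $\gL_j$ (if $\gL_j$ is not yet exhausted) or in $\gL_{j+1}$ (if it is). The argument is a perturbation estimate: I would bound $|\wtd\alpha_j^2 - \alpha_j^{2,\text{pop}}|$ in terms of $\epsilon_d$. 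One has to be a little careful because only the \emph{diagonal} of $\wtd\mSigma_{xx}$ is assumed close to that of $\mSigma_{xx}$ — but $\alpha_j^2$ equals $[\wtd\mSigma_{xx}]_{i_j,i_j}$ minus $\|\bm\beta_j\|^2$, and $\|\bm\beta_j\|^2 = [\wtd\mSigma_{xx}]_{\underline{\vi}_{k-1},i_j}^{\T}\wtd\mU_{k-1}^{-1}\wtd\mU_{k-1}^{-\T}[\wtd\mSigma_{xx}]_{\underline{\vi}_{k-1},i_j}$ is precisely the residual from regressing column $i_j$ on the previously selected columns, computed entirely from entries of $\wtd\mSigma_{xx}$. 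Here I would use the fact that for a Cholesky/regression residual, $\alpha_j^2 = \det([\wtd\mSigma_{xx}]_{\underline{\vi}_k,\underline{\vi}_k}) / \det([\wtd\mSigma_{xx}]_{\underline{\vi}_{k-1},\underline{\vi}_{k-1}})$, i.e. a ratio of principal minors — and since $[\mSigma_{xx}]_{\underline{\vi}_k,\underline{\vi}_k} = [\wtd\mSigma_{xx}]_{\underline{\vi}_k,\underline{\vi}_k} + E$ where $E$ differs only on the diagonal with $\|E\|_{\max}\le \epsilon_d$, this residual is a Lipschitz function of the diagonal perturbation on the relevant compact set. So $|\wtd\alpha_{i_t}^2 - \sigma_{i_t}^2| \le \epsilon_d$ for within-layer candidates and $|\wtd\alpha_{i_s}^2 - (\sigma_{i_s}^2 + \|\cdot\|^2)| \le \epsilon_d$ for later-layer candidates, hence the within-layer minimum is below any later-layer value as long as $2\epsilon_d < \Delta$; the stated bound $\epsilon_d < \Delta/4$ gives room to spare (and covers the small discrepancy coming from the fact that the residual perturbation bound may not be exactly $\epsilon_d$ but a small constant multiple). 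Finally I would note that selecting an index in $\gL_j$ keeps the set layer-consistent, and once $\gL_j$ is used up the next pick necessarily falls in $\gL_{j+1}$ (every node there has all its parents selected, so it now has the ``small'' conditional variance), closing the induction; the base case $k=1$ is the line-4 step, where the population conditional variances are just $\sigma_i^2 + \|[\mU]_{1:i-1,i}\|^2$ and the minimizer lies in $\gL_1$ by the same $\Delta$-gap argument.

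The main obstacle I anticipate is the perturbation bound in the middle step: making rigorous that the Cholesky diagonal entry $\wtd\alpha_j^2$ — a rational function of the entries of $\wtd\mSigma_{xx}$ involving the inverse $\wtd\mU_{k-1}^{-1}$ — differs from its population counterpart by at most a controlled multiple of $\epsilon_d$, uniformly over which layer-consistent index set has been chosen and which candidate $i_j$ is being tested. The clean way is the minor-ratio representation $\alpha_j^2 = \det(M_k)/\det(M_{k-1})$ with $M$ the relevant principal submatrix of the covariance, together with the observation that the perturbation $E = \mSigma_{xx} - \wtd\mSigma_{xx}$ restricted to these submatrices is diagonal with max-norm $\le \epsilon_d$; then $\det(M_k^{\text{pop}}) = \det(M_k + E_k)$ and one expands, bounding each term by $\epsilon_d$ times a sub-determinant that is itself controlled because all these matrices are positive semidefinite with bounded entries (the $[\mSigma_{xx}]_{ii}$ are finite by {\bf A1}, and the diagonal entries of $M_k$ are bounded by $\max_i[\mSigma_{xx}]_{ii}+\epsilon_d$). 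I would also double-check the degenerate possibility that some $\wtd\alpha_j^2$ comes out negative or zero for an ``out-of-layer'' candidate — this cannot happen for in-layer candidates since their population value $\sigma_{i_t}^2>0$ is bounded below (implicitly $\sigma_i>0$; if some $\sigma_i=0$ one argues separately or it is excluded), and for out-of-layer candidates a negative $\wtd\alpha_j^2$ only makes it more obviously not the minimizer. With these estimates in hand the induction goes through and the ordering $\underline{\vi}$ is recovered exactly up to within-layer permutation, which is the identifiable object per the uniqueness statement following \eqref{tblk}.
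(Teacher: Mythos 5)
Your proposal follows essentially the same strategy as the paper's proof: show by induction that the algorithm exhausts the layers in order, using the fact that the population conditional variance of a candidate whose parents are all already selected equals $\sigma_{i_t}^2$, while that of a candidate with unselected ancestors equals $\sigma_{i_s}^2+\|\cdot\|^2$, so Assumption \textbf{A2} supplies a gap of $\Delta$; the threshold $\Delta/4$ arises, exactly as you anticipate, because the empirical-versus-population comparison is paid for twice on each side of the inequality (the paper's steps (f) and (j) each cost $2\epsilon_d$). The paper first handles the root layer by comparing raw diagonal entries (cost $2\epsilon_d$, needing only $\epsilon_d<\Delta/2$) and then runs the layer induction on the quantities $[\wtd\mSigma_{xx}]_{jj}-\|[\wtd\vu_t]_{1:j}\|^2$, bounding the partial column norms of the empirical and population Cholesky factors via its Lemma A.1; you instead propose the Schur-complement/minor-ratio representation $\alpha_j^2=\det(M_k)/\det(M_{k-1})$. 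The two are interchangeable in principle, so the overall architecture of your argument matches the paper's.

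The concrete gap is in your perturbation step. You assert that $E=\mSigma_{xx}-\wtd\mSigma_{xx}$ ``differs only on the diagonal,'' but the hypothesis of the theorem bounds only the \emph{diagonal entries} of $E$ and says nothing about the off-diagonal ones, on which $\bm\beta_j$ (hence $\alpha_j^2$ for $k\ge 2$) genuinely depends; as written your argument proves the theorem only under the additional, unstated assumption that the off-diagonal entries of $\wtd\mSigma_{xx}$ are exact. Moreover, even granting that $E$ is diagonal, your claimed bound $|\wtd\alpha_{i_t}^2-\sigma_{i_t}^2|\le\epsilon_d$ does not follow from a termwise determinant expansion alone: a diagonal perturbation of the leading block $M_{k-1}$ propagates through $M_{k-1}^{-1}$ into the Schur complement with a constant depending on the conditioning of $M_{k-1}$, not just on $\max_i[\mSigma_{xx}]_{ii}$, so the ``small constant multiple'' you wave at needs to be pinned down before the $\Delta/4$ budget can be verified. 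In fairness, the paper's own proof leans on Lemma A.1 at the analogous point, and the partial-column-norm estimate invoked there likewise cannot follow from diagonal closeness alone (the second half of that lemma uses the spectral-norm control $\|\mSigma_{xx}-\wtd\mSigma_{xx}\|\le\epsilon_2$, which Theorem 2.1 does not assume) — so the hole you would need to patch is shared with the source; only the first step $k=1$, where raw diagonal entries are compared, is airtight under the stated hypothesis.
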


\vspace{0.1in}
\noindent{\bf Sample complexity for the ordering.}\;
According to Theorem~\ref{the:recover},
we can obtain the sample complexity for the exact recovery of the ordering by
$\max_i\|[\mSigma_{xx}]_{ii}-[\wtd\mSigma_{xx}]_{ii}\|\le \Delta/4$.
Specifically, $X_i$ is a linear combination of $\emN_i$'s and $\E(\emX_i)=0$,
$\mbox{Var}(\emX_i)=[\mSigma_{xx}]_{ii}$.
Then we can obtain the sample complexity $\gO(\log(p/\epsilon)\gM^2)$, where $\gM=\max_i[\mSigma_{xx}]_{ii}$. The proof can be found in Appendix.
In the following Table~\ref{tab:sample_complexity}, we compare the sample complexity with others'.

\begin{table*}[h]
\begin{center}
\caption{Sample complexity  for the ordering. The last column represents the $\mathcal{O}$ complexity of the sample number $n$ that makes the algorithm recover the DAG with probability at least $1 - \epsilon$, $p$ is the nodes number, $r$ represents the level of the graph, $q$ is the maximum in-degree, $d$ is the maximum total degree, $m$ represents the $m$'th moment bounded noise, $\mathcal{M} = \max_i[\mSigma_{xx}]_{ii}$,
$g(x) = x/ \log x$ ($g^{-1}$ exists when $x>3$ and it holds $g^{-1}(x) > x$).
\label{tab:sample_complexity}}
\setlength{\tabcolsep}{0.5mm}{
\begin{tabular}{c|c|c|c|c|c}
\hline\hline
\bf Algorithm & \bf Data & \bf Function & \bf Noise Type & \bf $\mX$ &\bf Sample Complexity \\  \hline

\makecell{NPVAR\\ ~\citet{gao2020polynomial}} &-&\makecell{(Non)-linear\\ Lip-continuous} &-&-&$\mathcal{O}((rp/\epsilon)^{1+p/2})$ \\ \hline

\multirow{2}{*}{\makecell{EV\\ ~\citet{chen2019causal}}}
& $n>p$& Linear & Sub-Gaussian &\makecell{$\lambda_{\text {min}} > 0$ }& $\mathcal{O}( p^2\log(p/\epsilon) \mathcal{M}^2)$  \\\cline{2-6}
& $n<p$ & Linear & Sub-Gaussian & \makecell{$\lambda_{\text {min}} > 0$ }& $\mathcal{O}(q^2\log(p/\epsilon) \mathcal{M}^2)$   \cr \hline

\multirow{2}{*}{\makecell{LISTEN\\ ~\citet{ghoshal2018learning}}}
&-& Linear & \makecell{Sub-Gaussian} & -  &$\mathcal{O}(d^4\log(\frac{p}{\sqrt{\epsilon}})\mathcal{M}^2)$\\ \cline{2-6}
&-& Linear & \makecell{Bounded moment} &-& $\mathcal{O}(d^4(\frac{p^2}{\epsilon})^{1/m}\mathcal{M}^2)$\\ \hline

\makecell{US\\ ~\citet{park2020identifiability}}
& $n>p$ & Linear & \makecell{Gaussian} & \makecell{$\lambda_{\min} > 0$\\ $\lambda_{\max}<\infty$}&$g^{-1}(\mathcal{O}(\log(p/\epsilon)\mathcal{M}^2))$  \\ \hline

CDCF (ours) & - &Linear& - & - &
{$\mathcal{O}(\log(p/\epsilon)\gM^2)$}\\
\hline\hline
\end{tabular}
}
\end{center}
\end{table*}

\begin{theorem}[Exact recovery of the structure]\label{the:recover2}
Follow the notations and assumptions in Theorem~\ref{the:recover}.
Denote
\begin{equation*}
\mu=\|\mI-\mT\|_{2,\infty}\|\mI-\mT^{\T}\|_{2,\infty},
\quad
\rho = \frac{\max_i\sigma_i}{\min_i\sigma_i^2},
\quad
\omega=\min_{\mT_{ij}\ne 0}|\mT_{ij}|.
\end{equation*}
Assume
$\|\mSigma_{xx}-\wtd\mSigma_{xx}\|\le \epsilon_2$ for some $\epsilon_2>0$.
If
$\epsilon_2\lesssim\frac{\omega^2}{8\rho^2\mu^2}$,
then Algorithm~\ref{alg1} is able to
 recover the graph structure exactly.
\end{theorem}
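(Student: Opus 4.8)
\textbf{Proof proposal for Theorem~\ref{the:recover2}.}

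The plan is to bootstrap from Theorem~\ref{the:recover}: under the stated hypothesis on $\epsilon_2$, the condition $\epsilon_d<\Delta/4$ should follow (since $\max_i|[\mSigma_{xx}]_{ii}-[\wtd\mSigma_{xx}]_{ii}|\le\|\mSigma_{xx}-\wtd\mSigma_{xx}\|\le\epsilon_2$, and one checks $\Delta$ can be lower bounded in terms of the quantities $\omega,\rho,\mu$ entering the hypothesis), so the ordering $\underline{\vi}$ is recovered exactly and coincides with the population one. Once the ordering is correct, $\wtd\mU_p$ is exactly the Cholesky factor of $[\wtd\mSigma_{xx}]_{\underline{\vi},\underline{\vi}}$ while $\mU$ is the Cholesky factor of $[\mSigma_{xx}]_{\underline{\vi},\underline{\vi}}=\mU^\T\mU$. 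So the first real step is a perturbation bound for the Cholesky factor: from $\|[\mSigma_{xx}]_{\underline{\vi},\underline{\vi}}-[\wtd\mSigma_{xx}]_{\underline{\vi},\underline{\vi}}\|\le\epsilon_2$ deduce $\|\mU-\wtd\mU_p\|\lesssim \epsilon_2\,\rho'$ for an appropriate conditioning factor, and then $\|\mU^{-1}-\wtd\mU_p^{-1}\|$, using $\|\mU^{-1}\|$-type bounds. Here the factors $\rho=\max_i\sigma_i/\min_i\sigma_i^2$ and $\mu=\|\mI-\mT\|_{2,\infty}\|\mI-\mT^\T\|_{2,\infty}$ should emerge: note $\mU=\diag(\sigma_{i_j})(\mI-\mT)^{-1}$, so $\mU^{-1}=(\mI-\mT)\diag(\sigma_{i_j}^{-1})$, and $\|\mU^{-1}\|$, $\|\mU\|$ are controlled by $\max_i\sigma_i$, $\min_i\sigma_i$ and the norms of $\mI-\mT$, $(\mI-\mT)^{-1}$.

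The second step is to translate a bound on $\|\mU^{-1}-\wtd\mU_p^{-1}\|$ (or the corresponding max-norm / $2,\infty$-norm bound) into a bound on $\|\wtd\mW-\mW\|_{\max}$ before truncation. Since $\mW=\mP\mT\mP^\T$ and, by \eqref{mU}, $\mT=\mI-\diag(\sigma_{i_j})^{-1}\mU$ up to the permutation, and the algorithm forms $\wtd\mW$ from $\textsc{triu}(\wtd\mU_p^{-1}\mA)$ with $\mA=\diag(\alpha_j)$, I would show that the un-truncated estimate $\wtd\mU_p^{-1}\mA$ approximates $(\mI-\mT)$ (permuted) with error $O(\epsilon_2\,\rho^2\mu^2)$ in max-norm; the $\alpha_j$'s are themselves $O(\epsilon_2)$-close to $\sigma_{i_j}$ by \eqref{eq:cholesky_diag} and the ordering being correct, so multiplying by $\mA$ introduces only lower-order error. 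Combining, $\|\wtd\mU_p^{-1}\mA-(\text{permuted }\mI-\mT)\|_{\max}\lesssim \epsilon_2\rho^2\mu^2$. Taking the strict upper triangular part commutes with this (the diagonal and lower part of $\mI-\mT$ are exactly reproduced up to the same small error, so \textsc{triu} only helps), and re-permuting is an isometry in max-norm.

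The final step is the truncation argument: the entries of the true $\mW$ are either $0$ or have absolute value $\ge\omega$. If the max-norm error of the un-truncated estimate is strictly below $\omega/2$, then every true-zero entry has estimated magnitude $<\omega/2<\omega$ and gets truncated to zero, while every true-nonzero entry has estimated magnitude $>\omega-\omega/2=\omega/2$, which exceeds the threshold $\omega/2$ — so choosing truncation threshold of order $\omega/2$ recovers the support exactly, hence the structure. Tracking the constant, the requirement becomes $\epsilon_2\lesssim\omega^2/(8\rho^2\mu^2)$ as stated (the extra $\omega$ and the $8$ coming from the Cholesky-perturbation constant and the factor-$2$ in the truncation gap). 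I expect the main obstacle to be Step~1: getting a clean, explicitly-constant perturbation bound for the Cholesky factor and its inverse that produces exactly the advertised $\rho^2\mu^2$ dependence — standard Cholesky perturbation theory (e.g. Drmač–Omladič–Veselić or Sun) gives such bounds but with conditioning constants that must be carefully matched to $\rho$ and $\mu$; the $2,\infty$-norm bookkeeping needed to keep the dependence on $\mu$ (rather than a cruder spectral-norm power) is the delicate part.
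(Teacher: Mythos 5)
Your overall architecture matches the paper's: establish the ordering via Theorem~\ref{the:recover}, view $\wtd\mU_p$ and the (permuted) population factor as Cholesky factors of nearby matrices, bound the entrywise error of $\wtd\mU_p^{-1}\mA$ against the permuted $\mI-\mT$, and conclude by truncating at $\omega/2$. However, the step you yourself flag as ``the main obstacle'' is where the theorem actually lives, and the scaling you anticipate there is inconsistent with the stated threshold. The paper's Lemma~\ref{lem:ll} controls the columns of $\mF$ (where $\wtd\mU_p\mU^{-1}=\mI+\mF$) only through $\mF_{jj}^2+\|\mF_{1:j-1,j}\|^2\le 2\|\mU^{-\T}\|_{2,\infty}^2\epsilon_2$, i.e.\ a \emph{square-root} dependence $\|\mF_{1:j,j}\|\le\|\mU^{-\T}\|_{2,\infty}\sqrt{2\epsilon_2}$. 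This propagates to $\|\bm\Pi\,\textsc{triu}(\wtd\mU_p^{-1}\mA)\bm\Pi^{\T}-\mT\|_{\max}\lesssim\mu\rho\sqrt{2\epsilon_2}$ (note: first power of $\mu\rho$), and the truncation requirement $\mu\rho\sqrt{2\epsilon_2}<\omega/2$ is precisely what yields $\epsilon_2\lesssim\omega^2/(8\rho^2\mu^2)$. Your proposed linear-in-$\epsilon_2$ bound of order $\epsilon_2\rho^2\mu^2$ would instead lead to a condition of the form $\epsilon_2\lesssim\omega/(2\rho^2\mu^2)$; the ``extra $\omega$ and the $8$'' you attribute to unspecified constants are not bookkeeping --- they are the signature of the $\sqrt{\epsilon_2}$ perturbation rate, so as sketched your route does not arrive at the hypothesis as stated. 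To close the gap you need an explicit Cholesky-inverse perturbation bound in the $(2,\infty)$/max norms whose error is $\mu\rho\sqrt{2\epsilon_2}$ (or a genuinely linear bound, which would change the theorem's condition).

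Two smaller points. First, the ordering is recovered only up to within-layer permutations: $\bm\Pi=\mP^{\T}\wtd\mP$ is block diagonal rather than the identity, and the paper must observe that $\bm\Pi^{\T}\mU\bm\Pi$ remains upper triangular because of the block structure \eqref{tblk} before the Cholesky comparison is legitimate; your ``coincides with the population one'' glosses over this. Second, the theorem simply inherits $\epsilon_d<\Delta/4$ from Theorem~\ref{the:recover} (``follow the \dots assumptions'') rather than deriving it from $\epsilon_2\lesssim\omega^2/(8\rho^2\mu^2)$; there is no general lower bound on $\Delta$ in terms of $\omega,\rho,\mu$, so your opening deduction is both unnecessary and not guaranteed to go through.
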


\noindent{\bf Sample complexity for the graph structure.}\;
The sample complexity for the exact recovery of the graph structure can be established via $\|\mSigma_{xx}-\wtd\mSigma_{xx}\| \lesssim \frac{\omega^2}{8\rho^2\mu^2}$.
For example, when the noise is sub-Gaussian, it holds with probability at least $1-2\exp(-ct^2)$ that
\begin{equation*}
\|\wtd\mSigma_{xx}-\mSigma_{xx}\|\le \|(\mI-\mT)^{-1}\|^2\max\{\delta,\delta^2\},
\end{equation*}
where $\delta=C\sqrt{\frac{p}{n}}+\frac{t}{\sqrt{n}}$, $c$ and $C$ are two constants.
For simplicity, let $\delta\ge 1$.
By calculations, we know that w.p. $\ge 1-\epsilon$ it holds  $\|\mSigma_{xx}-\wtd\mSigma_{xx}\| \lesssim \frac{\omega^2}{8\rho^2\mu^2}$ for $n=\gO((p+\log\frac{1}{\epsilon}) \rho^2\mu^2 \|(\mI-\mT)^{-1}\|^2/\omega^2)$.
Sample complexities for other distributions of the noise can be  obtained similarly.


\section{Learning Linear Causal Models with Latent Variables}\label{sec:latent}

In this section, we study the effects of latent variables, then present an algorithm to recover them.

\subsection{The Effects of Latent Variables}
When there are latent variables, the covariance matrix of the observed variables is essentially a principal submatrix of the overall covariance matrix.
To get a clue on how the Cholesky factor changes, we consider the case when there is one latent variable.

\vspace{0.1in}

Let $\mSigma_+=\begin{bmatrix}\mSigma_{11} & \vx &  \mSigma_{12}\\ \vx^{\T} & d^2 & \vy^{\T}\\ \mSigma_{12}^{\T} &
\vy & \mSigma_{22} \end{bmatrix}$ be a symmetric positive definite matrix, where $\vx\in\R^{p_1}$, $\vy\in\R^{p_2}$, $\mSigma_{11}\in\R^{p_1\times p_1}$,
$\mSigma_{22}\in\R^{p_2\times p_2}$.
Let the Cholesky factorization of $\mSigma_+$ be $\mSigma_+=\mU_+^{\T}\mU_+$,
where $\mU_+=\begin{bmatrix} \mU_{11} & \wht\vx & \mU_{12}\\
0 & \wht d & \wht\vy^{\T} \\ 0 & 0 & \wht\mU_{22}\end{bmatrix}$,
$\mU_{11}\in\R^{p_1\times p_1}$ and
$\wht\mU_{22}\in\R^{p_2\times p_2}$ be both upper triangular.
By simple calculations, we have
\begin{equation}
\wht\vx=\mU_{11}^{-\T}\vx,
\wht d = \sqrt{d^2-\|\wht\vx\|^2},
\wht\vy = \frac{1}{\wht d}(\vy-\mU_{12}^{\T}\wht\vx).
\end{equation}
Denote
$\vv=\mU_{11}^{-1}\wht\vx$, 
$\vw=\wht\mU_{22}^{-\T}\wht\vy$.
It also holds that
\begin{equation}
\mU_+^{-1}= \begin{bmatrix}\mU_{11}^{-1} & -\frac{1}{\wht d}\vv &\frac{1}{\wht d} \vv \vw^{\T} -\mU_{11}^{-1}\mU_{12}\wht\mU_{22}^{-1}\\
0 & \frac{1}{\wht d} & -\frac{1}{\wht d} \vw^{\T}\\
0 & 0 &\wht\mU_{22}^{-1}\end{bmatrix}.
\end{equation}

Now we remove the $p_1+1$st row and the corresponding column of $\mSigma_+$, we get $\mSigma=\begin{bmatrix}\mSigma_{11} & \mSigma_{12}\\ \mSigma_{12}^{\T} & \mSigma_{22} \end{bmatrix}$.
For the Cholesky factor of $\mSigma$, we have the following results.

\newpage
\begin{proposition}\label{prop1}
Let the Cholesky factorization of $\mSigma$ be $\mSigma=\mU^{\T}\mU$. It holds that

\vspace{0.1in}

\noindent{\bf (a)}\;
$\mU$ can be given by $\mU=\begin{bmatrix} \mU_{11} &\mU_{12} \\ 0& \mU_{22}\end{bmatrix}$, where $\mU_{22}$ satisfies
\begin{equation}\label{uu4}
\mU_{22}^{\T}\mU_{22}=\wht\mU_{22}^{\T}\wht\mU_{22}+\wht\vy\wht\vy^{\T}.
\end{equation}

\noindent{\bf (b)}\;
Let the Cholesky factorization of
$\mI+\vw\vw^{\T}$ be
$\mI+\vw\vw^{\T}=\mL^{\T}\mL$.
Then $\mU^{-1}$ can be given by
\begin{equation}
\mU^{-1}=\begin{bmatrix} \mU_{11}^{-1} & -\mU_{11}^{-1}\mU_{12}\mU_{22}^{-1}\\ 0 & \mU_{22}^{-1}\end{bmatrix}, \;\; \mU_{22}^{-1}=\wht\mU_{22}^{-1}\mL^{-1}.
\end{equation}

\noindent{\bf (c)}\;
The $k$th diagonal entry of $\mU_{22}$ can be given by
\begin{equation*}
[\mU_{22}]_{kk} = \sqrt{(1+\|\vw_{1:k}\|^2)/(1+\|\vw_{1:k-1}\|^2)}[\wht\mU_{22}]_{kk}.
\end{equation*}
\end{proposition}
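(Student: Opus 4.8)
\textbf{Proof proposal for Proposition~\ref{prop1}.}

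The plan is to exploit the block structure of the Cholesky factor $\mU_+$ of the augmented matrix $\mSigma_+$, which is already written out in the excerpt, and to track what happens when we delete the middle row and column. Write $\mSigma_+ = \mU_+^{\T}\mU_+$ with $\mU_+$ in the stated $3\times 3$ block-upper-triangular form. Deleting the $(p_1+1)$st row and column of $\mSigma_+$ produces $\mSigma$; on the factor side, this corresponds to deleting the $(p_1+1)$st \emph{column} of $\mU_+$, yielding a matrix $\mM=\begin{bmatrix}\mU_{11} & \mU_{12}\\ 0 & \wht d\,e_1^{\T}\text{-part removed?}\end{bmatrix}$ — more carefully, deleting column $p_1+1$ from $\mU_+$ gives a $p\times p$ matrix (where $p=p_1+p_2$) of the form $\begin{bmatrix}\mU_{11} & \mU_{12}\\ 0 & \begin{smallmatrix}\wht\vy^{\T}\\ \wht\mU_{22}\end{smallmatrix}\end{bmatrix}$, and one checks directly that $\mM^{\T}\mM=\mSigma$. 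So $\mM$ is \emph{a} square root of $\mSigma$ but not triangular; the bottom-right block $\begin{bmatrix}\wht\vy^{\T}\\ \wht\mU_{22}\end{bmatrix}\in\R^{(p_2+1)\times p_2}$ is ``tall.'' The Cholesky factor $\mU_{22}$ of the Schur complement must therefore satisfy $\mU_{22}^{\T}\mU_{22} = \begin{bmatrix}\wht\vy^{\T}\\ \wht\mU_{22}\end{bmatrix}^{\T}\begin{bmatrix}\wht\vy^{\T}\\ \wht\mU_{22}\end{bmatrix} = \wht\vy\wht\vy^{\T}+\wht\mU_{22}^{\T}\wht\mU_{22}$, which is exactly \eqref{uu4}; and since the $\mU_{11}$, $\mU_{12}$ blocks are untouched, part \textbf{(a)} follows. (One should also note uniqueness of Cholesky factors with positive diagonal to justify ``$\mU$ can be given by.'')

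For part \textbf{(b)}, I would first rewrite \eqref{uu4} by factoring out $\wht\mU_{22}$: using $\vw=\wht\mU_{22}^{-\T}\wht\vy$ we get $\mU_{22}^{\T}\mU_{22}=\wht\mU_{22}^{\T}(\mI+\vw\vw^{\T})\wht\mU_{22}$. With the Cholesky factorization $\mI+\vw\vw^{\T}=\mL^{\T}\mL$, this reads $\mU_{22}^{\T}\mU_{22}=(\mL\wht\mU_{22})^{\T}(\mL\wht\mU_{22})$; since $\mL$ upper triangular times $\wht\mU_{22}$ upper triangular is upper triangular with positive diagonal, uniqueness of Cholesky gives $\mU_{22}=\mL\wht\mU_{22}$, hence $\mU_{22}^{-1}=\wht\mU_{22}^{-1}\mL^{-1}$. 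The formula for the off-diagonal block of $\mU^{-1}$ is the standard inverse of a block-triangular matrix, so nothing new is needed there. Part \textbf{(c)} then reduces to computing $[\mL]_{kk}$: since $\mI+\vw\vw^{\T}$ is a rank-one update of the identity, its Cholesky factor $\mL$ has an explicit recursion, and in particular the leading principal minors of $\mI+\vw\vw^{\T}$ are $\det(\mI+\vw_{1:k}\vw_{1:k}^{\T})=1+\|\vw_{1:k}\|^2$ (matrix determinant lemma). Since $[\mL]_{kk}^2$ equals the ratio of consecutive leading principal minors, $[\mL]_{kk}=\sqrt{(1+\|\vw_{1:k}\|^2)/(1+\|\vw_{1:k-1}\|^2)}$, and because $\mU_{22}=\mL\wht\mU_{22}$ with both factors upper triangular, $[\mU_{22}]_{kk}=[\mL]_{kk}[\wht\mU_{22}]_{kk}$, which is the claim.

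I would present parts \textbf{(a)}, \textbf{(b)}, \textbf{(c)} in that order since each feeds the next. The only genuinely delicate point is the bookkeeping in part \textbf{(a)}: one must be careful that deleting row and column $p_1+1$ from $\mSigma_+=\mU_+^{\T}\mU_+$ really corresponds to deleting \emph{column} $p_1+1$ from $\mU_+$ (not the row), and that the resulting non-triangular square root still gives the right Schur complement; writing $\mSigma$ as $\mM^{\T}\mM$ explicitly block-by-block and matching against $\begin{bmatrix}\mSigma_{11}&\mSigma_{12}\\ \mSigma_{12}^{\T}&\mSigma_{22}\end{bmatrix}$ (using $\mSigma_{11}=\mU_{11}^{\T}\mU_{11}$, $\mSigma_{12}=\mU_{11}^{\T}\mU_{12}$, $\mSigma_{22}=\mU_{12}^{\T}\mU_{12}+\wht\vx$-free$\,=\mU_{12}^{\T}\mU_{12}+\wht\mU_{22}^{\T}\wht\mU_{22}+\wht\vy\wht\vy^{\T}$, the last equality being where \eqref{uu4} is forced) settles it. Everything else is routine linear algebra: block-triangular inverses, uniqueness of Cholesky, and the matrix determinant lemma.
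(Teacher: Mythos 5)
Your proposal is correct and follows essentially the same route as the paper: parts (a) and (b) are the "direct calculations" the paper alludes to (block-matching of $\mM^{\T}\mM$ against $\mSigma$, the identity $\mU_{22}^{\T}\mU_{22}=\wht\mU_{22}^{\T}(\mI+\vw\vw^{\T})\wht\mU_{22}$, and uniqueness of Cholesky), and your part (c) computes $[\mL]_{kk}^2$ as a ratio of leading principal minors of $\mI+\vw\vw^{\T}$, exactly as the paper does (it evaluates $\det(\mI+\vw_{1:k}\vw_{1:k}^{\T})=1+\|\vw_{1:k}\|^2$ by an eigenvalue argument where you cite the matrix determinant lemma, a cosmetic difference). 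Your write-up actually supplies more detail than the paper for (a) and (b); just clean up the garbled intermediate expressions in part (a) before presenting it.
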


\vspace{0.1in}

Proposition~\ref{prop1} (a) tells that the $(1,1)$ and $(1,2)$ blocks of $\mU$ are the same as the $(1,1)$ and $(1,3)$ blocks of $\mU_+$, respectively.
In addition, \eqref{uu4} is a rank-1 update of Cholesky factorization.
Proposition~\ref{prop1} (b) tells how the blocks of $\mU^{-1}$ relate with the blocks of $\mU_+^{-1}$.
The $(1,1)$ block of
$\mU^{-1}$ is the same as the $(1,1)$ block of $\mU_+^{-1}$.
Recall that $\vv$ and $\vw$ determine the in-going and out-going edges of the $p_1+1$st node, respectively, we expect $\vv$ and $\vw$ to be sparse.
For all $1\le i<j<p_2$, $\mG_{ij}\ne 0$ if and only if $\vw_i\vw_j\ne 0$. Hence $\mG$ (also its $\mG^{-1}$) are sparse.
We  thus claim that
the $(1,2)$ block of $\mU^{-1}$ can be obtained from the $(1,3)$ block of $\mU_+^{-1}$ by a sparse rank-1 update followed by a sparse matrix multiplication;
the $(2,2)$ block of $\mU^{-1}$ can be obtained from the $(3,3)$ block of $\mU_+^{-1}$ by a sparse matrix multiplication.
Proposition~\ref{prop1} (c) tells that
whenever $\vw_k\ne 0$,
$[\mU_{22}]_{kk}>[\wht\mU_{22}]_{kk}$
since
$\frac{1+\|\vw_{1:k}\|^2}{1+\|\vw_{1:k-1}\|^2}>1$.
This indicates
 the variance of the child node becomes larger when its parent nodes are missing.

\vspace{0.1in}

According to Proposition~\ref{prop1} and the discussion above, we may make the following claim:
\begin{itemize}
\item {\bf C1}\; The $(1,2)$ and $(2,2)$ blocks of $\mU^{-1}$ become denser compared with the $(1,3)$ and $(3,3)$ blocks of $\mU_+^{-1}$;
\item {\bf C2}\; The variance for the child node of latent nodes becomes larger.
\end{itemize}

In order to recover DAG with latent variables, we have to detect the latent variables and identify their in-going and out-going edges.
In this paper, we propose to detect latent variables via {\bf C2} and identify in-going and out-going edges via {\bf C1}. Note that
{\bf C2} alone is insufficient to detect latent variables since we only know the variance becomes larger.
Thus, we make an equi-variance assumption to  simplify the problem:

\vspace{0.1in}

\noindent{\bf A3}
The noise variables are all centered, uncorrelated and have equal variance $\sigma^2$.

\begin{remark}
The essence of detecting latent variables is to determine where a latent node is missing and how it is connected to other nodes.
To do so,
one need to carefully design some criterion:
whenever the criterion is violated, there is latent nodes.
{\bf A3} is perhaps the simplest criterion to accomplish the task.
One may also consider exploring various alternatives, such as employing graph structures to refine the criterion. This approach might involve focusing on the sparsity pattern of the graph or examining the in-degree and out-degree of the nodes within the graph.
\end{remark}

\subsection{Algorithm}

To proceed, we assume  an estimation $\wht\sigma$ for $\sigma$ in {\bf A3} is available.   This assumption is not restrictive since when a root node is observed, we can estimate $\wht\sigma$ via the sample variance of the root node.

At beginning, we observe $q<p$ nodes.
Input $\wtd\mSigma_{xx}\in\R^{q\times q}$
into Algorithm~\ref{alg1}, then it works as there were no latent variables until it encounters a child of latent variables.
So, we may detect a latent variable
when we find a diagonal entry of $\wtd\mU_q$ larger than $\wht\sigma$, then we may determine its connection with other nodes via a sparse encouraging optimization problem.
The above procedure can be used repeatedly to handle the case when there are multiple latent variables.

Next, we present the flowchart of the algorithm:
\begin{itemize}
\item {\bf S1}\; Perform Algorithm~\ref{alg1} with input $\wtd\mSigma_{xx}\in\R^{q\times q}$, output $\wtd\mP$ and $\wtd\mU_q^{-1}$.
\item {\bf S2}\; Check the diagonal entries of $\wtd\mU_q^{-1}$: if all entries are approximately $\frac{1}{\sigma}$, then there is no (identifiable) latent variable;
otherwise, find the first entry that is not approximately $\frac{1}{\sigma}$ and let it be the $j$th.
\item {\bf S3}\; Insert a variable between $j-1$st and $j$th variables
and determine its connections with others.
\item {\bf S4}\; Update $\wtd\mSigma_{xx}$ by appending one row and column to it, then go to step {\bf S1} to repeat.
\end{itemize}

In what follows we give details for {\bf S3} and {\bf S4}.
Define
\begin{align}
\mU(\mS)&:=\wht\sigma (\mI -\mS)^{-1},\
\mC(\mS):=\mU(\mS)^{\T}\mU(\mS),  \label{ucs}\\
\mS\in\mathbb{U}_q&:=\{\mU\in\R^{q\times q}\;|\; \mU_{ij}=0 \mbox{ for all } i>j\}.\notag
\end{align}
Let $\underline{\vi}=[1,\dots,q]\wtd\mP$ and
we parameterize the Cholesky factor of $[\wtd\mSigma_{xx}]_{\underline{\vi},\underline{\vi}}$ by $\mU(\mS)$, with $\mS\in\mathbb{U}_q$.
Note that all diagonal entries of $\mU(\mS)$ are $\wht\sigma$.
The reason for having such a requirement is that
after inserting a variable, we expect that the variances of all variables are approximately $\sigma^2$.
Note that $\mS$ should be sparse, and the resulting covariance matrix should be close to the observed covariance matrix.
We thus determine $\mS$ by
\begin{equation}\label{opt}
\min_{\mS\in\mathbb{U}_q}\|[\mC(\mS)]_{\underline{\vj}^c,\underline{\vj}^c} - [\wtd\mSigma_{xx}]_{\underline{\vi}^c,\underline{\vi}^c}\|_F^2 + \mu \|\mS\|_1,
\end{equation}
where $\underline{\vi}^c$ and $\underline{\vj}^c$ are two index sets that pick all observed indices of $\wtd\mSigma_{xx}$ and $\mC(\mS)$, respectively, $\mu\ge 0$ is a parameter.

Let $\mS$ be the solution to \eqref{opt}.
We compute $\mC(\mS)$ by \eqref{ucs}, then update $\wtd\mSigma_{xx}$ as follows:
\begin{equation}\label{update}
\wtd\mSigma_{xx}=\begin{bmatrix} \wtd\mSigma_{xx} & \vz\\ \vz^{\T} & d^2\end{bmatrix},
\end{equation}
where $\vz=[\vx^{\T}, \vy^{\T}]^{\T}$ with $\vx=[\mC(\mS)]_{1:j-1,j}$ and $\vy=[\mC(\mS)]_{j+1:q,j}$,
$d^2=[\mC(\mS)]_{j,j}$.

\begin{algorithm}[t]
\caption{Causal Discovery with Latent Variables via Cholesky Factorization (CDCF$^+$)}
\begin{algorithmic}[1]
\State {\bfseries input}:
The covariance matrix $\wtd\Sigma_{xx}\in\R^{q\times q}$, an estimation $\wht\sigma$, and
two parameters $\zeta$, $\mu$.
\State {\bfseries output}: A permutation $\wtd\mP$ and an upper triangular matrix $\wtd\mU_q^{-1}$.
\State $\underline{\vj}=[]$;
\While{$\# \underline{\vj} \le p-q$}
\State {\bf call} Alg.~\ref{alg1} with input $\wtd\mSigma_{xx}$, output $\underline{\vi}$ and $\wtd\mU_q^{-1}$;

\If{$\min_i[\wtd\mU_q^{-1}]_{ii}<\frac{1-\zeta}{\wht\sigma}$}
\State
$j=\argmin_i[\wtd\mU_q^{-1}]_{ii}<\frac{1-\zeta}{\wht\sigma}$;
\State $q=q+1$;
\State $\underline{\vj}=[\underline{\vj},j]$, $\underline{\vj}^c=[1,\dots,q]\setminus\underline{\vj}$;
\State $\underline{\vi}=[i_1,\dots,i_{j-1},q,i_j,\dots,i_{q-1}]$, $\underline{\vi}^c=\underline{\vi}_{\underline{\vj}^c}$;
\State Solve \eqref{opt} for $\mS$;
\State Compute $\mC(\mS)$ in \eqref{ucs};
\State Update
$\wtd\mSigma_{xx}$ via \eqref{update};
\Else
\State {\bf break};
\EndIf
\EndWhile
\State Set $\wtd\mP=\mI(:,\underline{\vi})$. 
\end{algorithmic}
\label{alg2}
\end{algorithm}

The detailed algorithm is summarized in Algorithm~\ref{alg2}.
On input,
$\zeta$ is a threshold to determine if the diagonal entries of $\wtd\mU_q^{-1}$ are approximately $\frac{1}{\wht\sigma}$,
$\mu$ is the sparsity encouraging parameter used in \eqref{opt}.
Line 4, $\#\underline{\vj}$ stands for the number of elements in $\underline{\vj}$, $p$ is an integer no less than $q$. On output, the size of $\wtd\mU_q^{-1}$ is at most $p\times p$,
the diagonal entries of $\wtd\mU_q^{-1}$ are all approximately $\frac{1}{\wht\sigma}$. 

\newpage
\noindent{\bf Time Complexity.}
For each ``while'' loop of CDCF$^+$,
the time complexity is dominated by line 11;
line 5 is simply CDCF and the time complexity is $\gO(q^3)$;
line 12 is the inversion of an upper triangular matrix and a matrix-matrix multiplication, the time complexity is $\gO(q^3)$;
the time complexity of the rest can be ignored.
Line 11, sub-gradient method is used to solve the optimization problem \eqref{opt},
the time complexities for gradient calculation and updating $\mathbf{S}$ are both $\gO(q^2)$.
Assume the sub-gradient method needs $\gO(\log\frac1\epsilon)$ iterations to solve \eqref{opt}, then the time complexity for Line 11 is $\gO(q^2 \log\frac1\epsilon)$.
Overall speaking, the time complexity of CDCF$^+$ is
$\gO((p-q)(q + \log\frac1\epsilon)q^2)$.

\vspace{0.1in}
\noindent{\bf Convergence.}
With a proper choice of $\mu$, Algorithm~\ref{alg2} converges in finite steps, and the output $\wtd\mU_q^{-1}$ has almost identical diagonal entries, additionally, due to the $\ell_1$-penalty term, $\wtd\mU_q^{-1}$ is approximately sparse. The proof  can be found in the appendix.

\begin{remark}
Algorithm~\ref{alg2} produces a sparse DAG, but not necessarily the true DAG, since the true DAG with latent variables can be unidentifiable~\citep{adams2021identification}.
\end{remark}

\section{Numerical Experiments}\label{sec:exp}

In the experiments, the augmented covariance matrix
$\wtd\mSigma_{xx}=\frac1n \mX^{\T}\mX + \lambda \mI$ is used to estimate $\mSigma_{xx}$,
where $\lambda$ is a parameter.

\subsection{Experiments for CDCF}
In this section, we present the results of our experiments conducted on simulated graphs, bioinformatics datasets, and knowledge base datasets. Additionally, further experiments addressing non-Gaussian distributions and diagonal augmentation settings are provided in the appendix.
\subsubsection{Simulated Graphs}
We evaluate CDCF on simulated graphs from two well-known ensembles of random graph types: Erd\"os–R\'enyi (ER)~\citep{gilbert1959random} and Scale-free (SF)~\citep{barabasi1999emergence}.
The average edge number per node is denoted after the graph type, e.g., ER2 represents two edges per node on average.
After the graph structure is settled, we assign uniformly random edge weights. 
We generate the observation data $\mX$ from the linear Gaussian SEM.

Our baselines include: NOTEARS~\citep{zheng2018dags}, DAG-GNN~\citep{yu2019dag}, CORL~\citep{wang2021ordering}, NPVAR~\citep{gao2020polynomial}, and EQVAR~\citep{chen2019causal}. Other methods such as PC algorithm~\citep{spirtes2020causation}, LiNGAM~\citep{shimizu2006linear}, FGS~\citep{ramsey2017million}, MMHC~\citep{tsamardinos2006max}, L1OBS~\citep{schmidt2007learning}, CAM~\citep{buhlmann2014cam}, RL-BIC2~\citep{zhu2020causal}, A*LASSO~\citep{xiang2013a}, LISTEN~\citep{ghoshal2018learning}, US~\citep{park2020identifiability} perform no better than the baseline EQVAR.

\begin{table*}[h]
\centering
\caption{Results of 50, 100, 1000 nodes on 3000 linear Gaussian SEM samples.\vspace{-0.1in}}
\label{tab:baseline_results}
        \begin{tabular}{lccccccc}
            \hline\hline
            \ttfamily  \bf Nodes & \ttfamily \bf Graph & \ttfamily \bf NOTEARS & \ttfamily \bf DAG-GNN &
            \ttfamily \bf CORL-2 & \ttfamily \bf NPVAR & \ttfamily \bf EV-TD & \ttfamily \bf CDCF \\
            \hline

            \multirow{4}{*}{50} &
            \multirow{1}{*}{ER2} &

            \multicolumn{1}{c}{$38.6_{10.8}$} &
            \multicolumn{1}{c}{$30.6_{8.3}$} &
            \multicolumn{1}{c}{$17.9_{10.6}$} &
            \multicolumn{1}{c}{$0.4_{0.5}$} &
            \multicolumn{1}{c}{$\bf0.0_{0.0}$} &
            \multicolumn{1}{c}{$\bf0.0_{0.0}$} \\
            & \multirow{1}{*}{ER5} &

            \multicolumn{1}{c}{$67.8_{7.5}$} &
            \multicolumn{1}{c}{$93.2_{109.4}$} &
            \multicolumn{1}{c}{$64.8_{13.1}$} &
            \multicolumn{1}{c}{$0.6_{0.8}$} &
            \multicolumn{1}{c}{$0.1_{0.3}$} &
            \multicolumn{1}{c}{$\bf0.0_{\bf0.0}$} \\
            & \multirow{1}{*}{SF2} &

            \multicolumn{1}{c}{$3.5_{1.6}$} &
            \multicolumn{1}{c}{$79.3_{93.2}$} &
            \multicolumn{1}{c}{$\bf0.0_{\bf0.0}$} &
            \multicolumn{1}{c}{$1.1_{1.0}$} &
            \multicolumn{1}{c}{$\bf0.0_{\bf0.0}$} &
            \multicolumn{1}{c}{$\bf0.0_{\bf0.0}$} \\
            & \multirow{1}{*}{SF5} &

            \multicolumn{1}{c}{$20.1_{14.3}$} &
            \multicolumn{1}{c}{$89.2_{99.2}$} &
            \multicolumn{1}{c}{$20.8_{10.1}$} &
            \multicolumn{1}{c}{$1.0_{0.9}$} &
            \multicolumn{1}{c}{$\bf0.0_{\bf0.0}$} &
            \multicolumn{1}{c}{$\bf0.0_{\bf0.0}$}
            \\\specialrule{.075em}{.0em}{.0em}
            \multirow{4}{*}{100} &
            \multirow{1}{*}{ER2} &

            \multicolumn{1}{c}{$72.6_{23.5}$} &
            \multicolumn{1}{c}{$66.2_{19.2}$} &
            \multicolumn{1}{c}{$18.6_{5.7}$} &
            \multicolumn{1}{c}{$2.1_{1.2}$} &
            \multicolumn{1}{c}{$\bf0.0_{\bf0.0}$} &
            \multicolumn{1}{c}{$\bf0.0_{\bf0.0}$} \\
            & \multirow{1}{*}{ER5} &

            \multicolumn{1}{c}{$170.3_{34.2}$} &
            \multicolumn{1}{c}{$236.4_{36.8}$} &
            \multicolumn{1}{c}{$164.8_{17.1}$} &
            \multicolumn{1}{c}{$2.3_{1.2}$} &
            \multicolumn{1}{c}{$0.2_{0.4}$} &
            \multicolumn{1}{c}{$\bf0.1_{\bf0.3}$} \\
            & \multirow{1}{*}{SF2} &

            \multicolumn{1}{c}{$2.3_{1.3}$} &
            \multicolumn{1}{c}{$156.8_{21.2}$} &
            \multicolumn{1}{c}{$\bf0.0_{\bf0.0}$} &
            \multicolumn{1}{c}{$3.0_{1.41}$} &
            \multicolumn{1}{c}{$\bf0.0_{\bf0.0}$} &
            \multicolumn{1}{c}{$\bf0.0_{\bf0.0}$} \\
            & \multirow{1}{*}{SF5} &

            \multicolumn{1}{c}{$90.2_{34.5}$} &
            \multicolumn{1}{c}{$165.2_{22.0}$} &
            \multicolumn{1}{c}{$10.8_{6.1}$} &
            \multicolumn{1}{c}{$2.7_{0.9}$} &
            \multicolumn{1}{c}{$0.1_{0.3}$} &
            \multicolumn{1}{c}{$\bf0.0_{\bf0.0}$}
            \\\specialrule{.075em}{.0em}{.0em}

            \multirow{4}{*}{1000} &
            \multirow{1}{*}{ER2} &

            \multicolumn{1}{c}{-} &
            \multicolumn{1}{c}{-} &
            \multicolumn{1}{c}{-} &
            \multicolumn{1}{c}{-} &
            \multicolumn{1}{c}{$0.4_{0.5}$} &
            \multicolumn{1}{c}{$\bf0.1_{\bf0.3}$} \\
            & \multirow{1}{*}{ER5} &

            \multicolumn{1}{c}{-} &
            \multicolumn{1}{c}{-} &
            \multicolumn{1}{c}{-} &
            \multicolumn{1}{c}{-} &
            \multicolumn{1}{c}{$21.8_{3.8}$} &
            \multicolumn{1}{c}{$\bf8.9_{\bf4.2}$} \\
            & \multirow{1}{*}{SF2} &

            \multicolumn{1}{c}{-} &
            \multicolumn{1}{c}{-} &
            \multicolumn{1}{c}{-} &
            \multicolumn{1}{c}{-} &
            \multicolumn{1}{c}{$\bf0.0_{\bf0.0}$} &
            \multicolumn{1}{c}{$\bf0.0_{\bf0.0}$} \\
            & \multirow{1}{*}{SF5} &

            \multicolumn{1}{c}{-} &
            \multicolumn{1}{c}{-} &
            \multicolumn{1}{c}{-} &
            \multicolumn{1}{c}{-} &
            \multicolumn{1}{c}{$0.3_{0.5}$} &
            \multicolumn{1}{c}{$\bf0.0_{\bf0.0}$}

            \\\hline\hline
        \end{tabular}
 \vspace{0.1in}
\end{table*}

Table~\ref{tab:baseline_results} presents the structural Hamming distance (SHD) of baseline methods and our method on 3000 samples ($n=3000$). Nodes number $p$ is noted in the first column. Graph type and edge level are noted in the second column.
We only report the SHD of different algorithms due to page limitations. And we find that other metrics such as true positive rate (TPR), false discovery rate (FDR), false positive rate (FPR), and F1 score have similar comparative performance with SHD. We also test bottom-up EQVAR, which is equivalent to LISTEN. The result is worse than top-down EQVAR (EV-TD) in this synthetic  experiment, so we do not include the result in the table. For $p=1000$ graphs, we only report the result of EV-TD and CDCF since other algorithms spend too much time (longer than a week).

We run our methods on 10 randomly generated graphs and report the mean and variance in the table.
Figure~\ref{fig:compare} plots the SHD results tested on 100 nodes graph recovering from different sample sizes. In the low dimension setting ($p < n$), we choose EV-TD, LISTEN (LTN), and LISTEN with Cholesky estimator (LTN-CH) as baselines. In the high dimension setting ($p > n$), we choose high dimension top-down (EV-HTD) and LISTEN (LTN) as baselines.
We can see that CDCF achieves significantly better performance compared with previous baselines.
In most cases, CDCF can reconstruct the DAG structure exactly  according to the observing data, while the other algorithms
fail to recover the ground truth graphs.

\newpage

\begin{figure*}[t]
\begin{center}
\includegraphics[width=6.7in]
{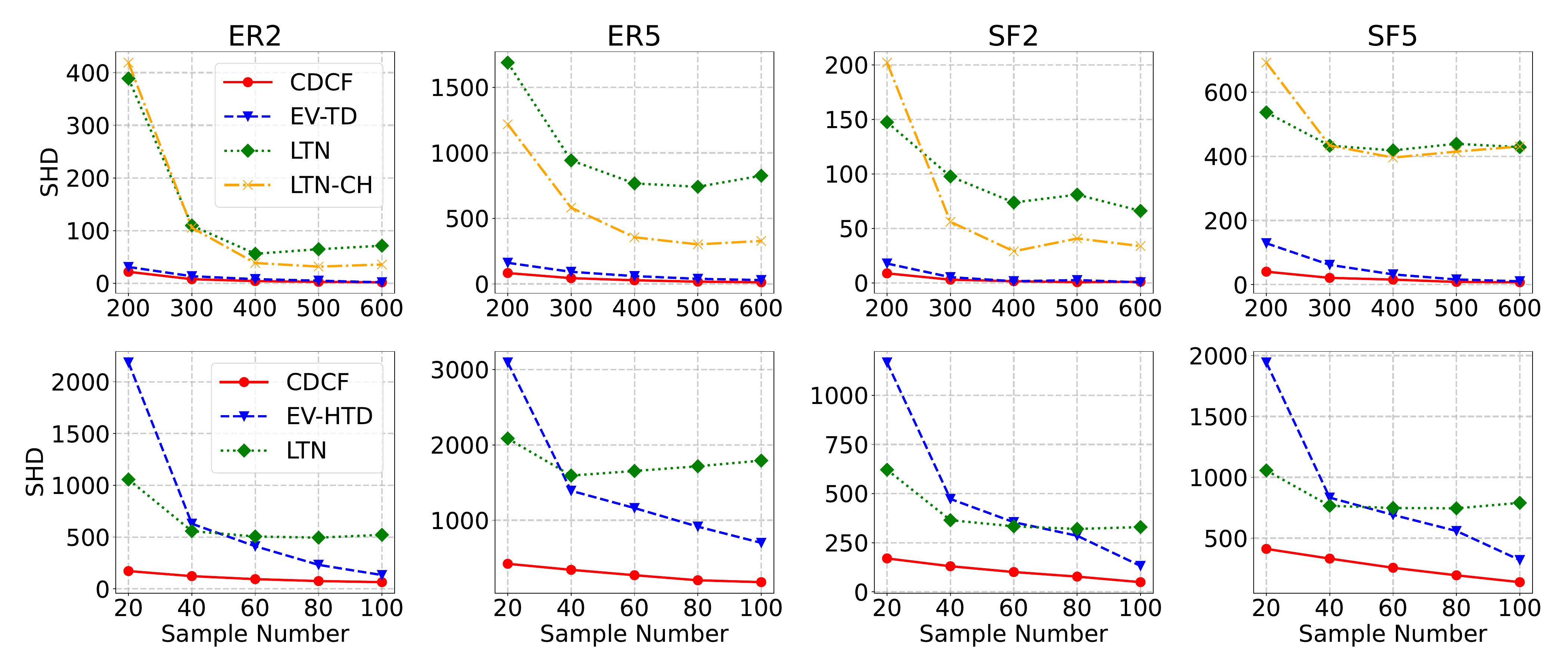}
\end{center}

\vspace{-0.2in}

\caption{Performance (SHD) tested on 100 nodes graph recovering from different sample numbers.} \label{fig:compare}
\end{figure*}

\begin{table*}[!h]
\begin{center}
\caption{Running time (seconds) on 30 and 100 nodes over 3000 samples. \label{tab:result_time}}
\setlength{\tabcolsep}{0.5mm}{
\begin{tabular}{l rrrr rrrr}
\hline\hline
& \multicolumn{4}{c}{\bf30} & \multicolumn{4}{c}{\bf100} \cr
\cline{2-5} \cline{6-9}
& ER2 & ER5 & SF2  & SF5 & ER2 & ER5 & SF2 & SF5 \\  \hline

CDCF  & $\bf0.004$ & $ \bf0.005$ & $\bf0.004$&$\bf 0.005$ & $ \bf0.017$& $ \bf0.016 $&$ \bf0.016$  & $\bf 0.017$ \\
EV-TD & $0.19$ & $ 0.16$ & $0.12$&$ 0.12$ & $ 14.42 $& $ 12.88 $&$ 15.04$  & $ 14.78$ \\
LISTEN &$0.26$ & $0.13$& $0.13$&$0.14$ & $13.97$& $13.41$&$13.42$&$15.43$\\
EV-HTD & $8.27$ & $ 7.48$ & $6.72$&$ 12.50$ & $ 260.74 $& $ 302.36 $&$ 241.59$  & $  387.92$ \\
DAG-GNN &$49.15$&$49.02$& $38.44$ & $41.03$ & $137.25$&$238.71$& $158.13$&$187.21$\cr
NPVAR & $84.24$ & $82.57$ &$108.37$&$109.13$&$9867.96$&$9084.78$&$10667.88$&$10173.89$\\
NOTEARS  & $78.19$ & $ 597.16$ & $51.57$&$ 306.31$ & $ 3237.8 $& $ 1803.30 $&$ 880.19$  & $ 4159.82$ \\
CORL1  & $17573.08$ & $ 18799.21$ & $16422.11$ & $16588.30$ & $ - $& $ -$&$-$  & $-$ \\ \hline\hline
\end{tabular}
}
\end{center}\vspace{-0.1in}
\end{table*}

Table~\ref{tab:result_time} shows the running time, tested on a 2.3 GHz single Intel Core i5 CPU. 
As illustrated in the table, CDCF is dozens or hundreds of times faster than EV-TD and LISTEN and tens of thousands of times faster than NOTEARS.

\subsubsection{Proteins Dataset}
We consider a bioinformatics dataset~\citep{sachs2005causal}
consisting of continuous measurements of expression levels of proteins and phospholipids in the human immune
system cells. This is a widely used dataset for research on graphical models, with experimental annotations accepted by the biological research community. Following the previous algorithms setting, we noticed that different previous papers adopted different observations. To included them all, we considered the observational 853 samples from the ``CD3, CD28'' simulation tested by~\citet{teyssier2005ordering,lachapelle2020gradient,zhu2020causal} and all 7466 samples from nine different simulations tested by~\citet{zheng2018dags, zheng2020learning, ng2019graph}.

\newpage

\begin{table}[h]
\begin{center}
\caption{Results on Proteins datasets.\label{tab:result_proteins_853}}\vspace{0.1in}
\setlength{\tabcolsep}{0.5mm}{
\begin{tabular}{c|c|c|c|c|c|c|c|c}
\hline\hline
\bf Datasets & \bf Methods & \bf FDR & \bf TPR & \bf FPR &\bf SHD & \bf N & \bf P & \bf F1 \\  \hline
\multirow{8}{*}{\makecell{853 samples \\ 17 edges}}
 &CDCF-V     & $0.533$&$\bf0.412$&$0.210$&$11$&$15$&$0.467$&$0.438$ \\
 &CDCF-S    & $\bf0.500$&$\bf0.412$&$\bf0.184$&$\bf10$&$14$&$\bf0.500$&$\bf0.452$  \\
 &CDCF-VS  & $\bf0.500$&$\bf0.412$&$\bf0.184$&$\bf10$&$14$&$\bf0.500$&$\bf0.452$ \\

 &NOTEARS & $0.588$ & $\bf0.412 $ &$0.263$& $13$ & $17$ & $0.412$  &$0.412$ \\
 &NOTEARSMLP & $0.733$ & $0.235 $ &$0.290$& $18$ & $15$ & $0.267$  &$0.250$ \\
 &CORL1\&2& $0.533$ & $\bf0.412$ &$0.211$& $11$ & $15$ &$0.467$ & $0.438$ \\
 &EV-TD&$0.645$&$0.294$&$0.237$&$17$&$14$&$0.357$&$0.323$\\
 &LISTEN&$0.750$&$0.176$&$0.237$&$18$&$12$&$0.250$&$0.207$\\
 &NPVAR&$0.800$&$0.176$&$0.316$&$19$&$15$&$0.200$&$0.188$\\
 &DAG-GNN &$0.588$&$\bf0.412$&$0.263$&$15$&$17$&$0.412$&$0.412$ \\
 \hline
\multirow{7}{*}{\makecell{7466 samples \\ 20 edges}}
 &CDCF-V     & $0.667$&$0.400$&$0.457$&$21$&$24$&$0.333$&$0.364$ \\
 &CDCF-S     & $0.611$&$0.350$&$0.314$&$17$&$18$&$0.389$&$0.368$ \\
 &CDCF-VS    & $\bf0.556$&$0.400$&$\bf0.286$&$\bf16$&$18$&$\bf0.444$&$\bf0.421$ \\

 &NOTEARS & $0.650$ & $0.350 $ &$0.371$& $20$ & $20$& $0.350$ &$0.350$ \\
 &NOTEARSMLP & $0.800$ & $0.200 $ &$0.457$& $26$ & $20$ & $0.200$&$0.200$ \\
 &CORL1\&2& $0.667$ & $0.400$ &$0.457$& $21$ & $24$& $0.333$ & $0.363$ \\
&EV-TD&$0.700$&$0.300$&$0.400$&$25$&$20$&$0.300$&$0.300$\\
&LISTEN&$0.714$&$0.300$&$0.429$&$23$&$21$&$0.286$&$0.293$\\
&NPVAR&$0.679$&$\bf0.450$&$0.543$&$24$&$28$&$0.321$&$0.375$\\
&DAG-GNN&$0.650$&$0.350$&$0.371$&$20$&$20$&$0.350$&$0.350$ \\
 \hline\hline
\end{tabular}
}
\end{center}
\end{table}

We notice that CDCF determines the ordering by the variance \eqref{ik}.
One may also take the sparsity into consideration.
In fact, we make the three criterion below to introduce sparsity into CDCF.
We select $i_k$ according to one of the following criteria:

\begin{itemize}

\item[{\bf(V)}] $i_k=\argmin_{k\le j \le p} \alpha_j^2$.
Under the assumption that the noise variance of the child variable
is approximately larger than that of its parents, it is reasonable/natural to
select the index that has the lowest estimation of the noise variance.
This criterion is same with CDCF introduced in Section~\ref{sec:sem}.

\item[{\bf(S)}] $i_k=\argmin_{k\le j \le p} \|\wtd\mU_{k-1}^{-1}\bm\beta_j\|_1$.
When the adjacent matrix $\mT$ is sparse, and the noise variables are independent, we would like to select the index that leads to the most sparse column of $\wtd\mU_k^{-1}$.
This criterion is especially useful when the number of samples is small.

\item[{\bf(VS)}] $i_k =\argmin_{k\le j \le p} \|\wtd\mU_{k-1}^{-1}\bm\beta_j\|_1 \sqrt{\big|\alpha_j^2 - \frac{1}{k-1}\sum_{h=1}^{k-1}\frac{1}{[\wtd\mU_{k-1}^{-1}]_{hh}^2}\big|}$.
We empirically combine criterion {\bf (V)} and criterion {\bf (S)} together
to take both aspects (variance and sparsity) into account.
Numerically, we found that this criterion achieves the best performance in real-world data.
\end{itemize}

We report the experimental results on both settings in Table~\ref{tab:result_proteins_853}.
The evaluation metric is FDR, TPR, FPR, SHD, predicted nodes number (N), precision (P), F1 score. As the recall score equals TPR, we do not include it in the table. In both settings, CDCF-VS achieves state-of-the-art performance.\footnote{For  NOTEARS-MLP, Table~\ref{tab:result_proteins_853} reports the results reproduced by the code provided in~\citet{zheng2020learning}.}
Several reasons make the recovered graph not exactly the same as the expected one. The ground truth graph suggested by the paper is mixed with directed and indirect edges. Under the settings of SEM, the node ``PKA'' is quite similar to the leaf nodes since most of its edges are indirect while the ground truth graph notes it as the out edges. Non-linearity would not be an impact issue here since both NOTEARS and our algorithm  achieve decent results. In the meantime, we do not deny that further extension of our algorithm to non-linear representation would witness an improvement on this dataset.

\subsubsection{Knowledge Base Dataset}
We test our algorithm on FB15K-237 dataset~\citep{toutanova2015representing} in which the knowledge is organized as $\{Subject, Predicate, Object\}$ triplets. The dataset has 15K triplets and 237 types of predicates. In this experiment, we only consider the single jump predicate between the entities, which have 97 predicates remaining. We want to discover the causal relationships between the predicates. We organize the observation data as each sample corresponds to an entity with awareness of the position (Subject or Object), and each variable corresponds to a predicate in this knowledge base.

\begin{figure}[b!]
\vspace{-0.1in}
\centering
\includegraphics[width=3.1in,trim={6.6cm 5cm 1.5cm 5cm},clip]
{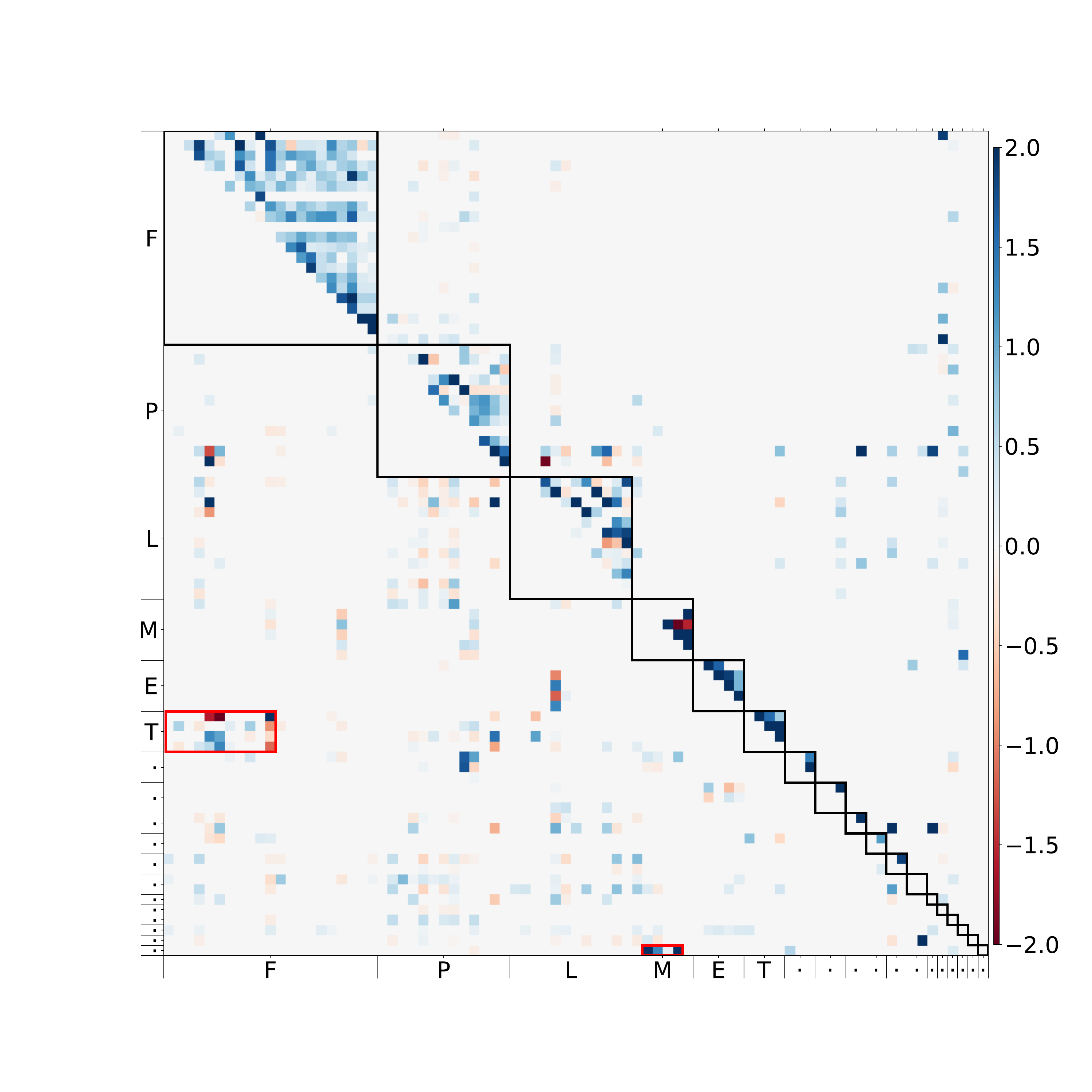}
\qquad
\small
\setlength{\tabcolsep}{0.5mm}
\begin{tabular}[b]{ll}\hline
Parent & Child \\ \hline
Medicine/Disease  & People/CauseOfDeath \\
Medicine/RiskFactors & People/CauseOfDeath \\
Broadcast/Artist     & Music/Artist\\
Broadcast/Artist     & Music/Instrumentalists\\
Tv/ProgramCreator&Tv/Languages\\
Tv/ProgramCreator&Tv/OriginCountry\\
Tv/Languages&Tv/OriginCountry\\
Tv/Genre&Film/Country \\
Film/Director&Media/NetflixGenre\\
Film/StoryBy&Film/Prequel \\
Film/WrittenBy&Film/Genre\\
Disease/RiskFactors&Disease/symptom\\
Sports/Colors&Sports/Teams\\
Person/Nationality&Person/PlaceOfBirth\\
Location/TimeZones&Location/Country\\
Education/Campuses&Education/SchoolType\\
Olympics/Countries&Event/Locations \\
Olympics/Countries&People/Language \\
\hline
\end{tabular}
\caption{The recovered weighted adjacent matrix (left) and examples of the high confidence relation pairs (right) on FB15k-237 dataset.}\label{fig:fb}
\vspace{-0.1in}
\end{figure}

In Figure~\ref{fig:fb}, we give the adjacent weighted matrix of the generated graph and several examples with high confidence (larger than 0.5). In the left figure, the axis label notes the first capital letter of the domain of the relations. Some of them are replaced with a dot to save space.
Specifically, the axis labels of Figure~\ref{fig:fb} are `Film', `People', `Location', `Music', `Education', `Tv', `Medicine', `Sports', `Olympics', `Award', `Time', `Organization', `Language', `MediaCommon', `Influence', `Dataworld', `Business', `Broadcast' from left to right for x-axis and top to bottom for y-axis, respectively. The adjacent matrix plotted in the Figure is re-permuted to make the relations in the same domain close to each other. We keep the adjacent matrix inside a domain an upper triangular matrix. Such typology is equivalent to the generated matrix with the original order. The domain clusters are denoted in black boxes at the diagonal of the adjacent matrix. The red boxes denoted the cross-domain relations that are worth paying attention to. Consistent with the innateness of human sense, the recovered relationships inside a domain are denser than those across domains. In the cross-domain relations, we found that the predicate in domain ``TV'' (``T'') has many relations with the domain ``Film'' (``F''), the domain ``Broadcast'' (last row) have many relations with the domain ``Music'' (``M''). Several cases of the predicted causal relationships are listed on the right side of Figure~\ref{fig:fb}, we can see that the discovered indication relations between predicates are quite reasonable.


\vspace{-0.05in}
\subsection{Experiments for CDCF$^+$}
We compare  CDCF$^+$ with BIC guided search (B-S) algorithm mentioned in~\citet{adams2021identification}. Their results show 22 different identifiable (up to re-indexing of the latent variables) typologies with three observable nodes and 0-2 unobservable nodes, which are shown in Figure~\ref{fig:graphtypes}. Following their  settings, we test Algorithm~\ref{alg2} on the 22 graphs.
We randomly generate 10 datasets for each graph and provide the average results in Table~\ref{tab:result_latent}. For each random dataset, the result is recorded as the best SHD score among the latent variable permutations.

\begin{figure*}[t]
\centering
\includegraphics[width=6.5in]
{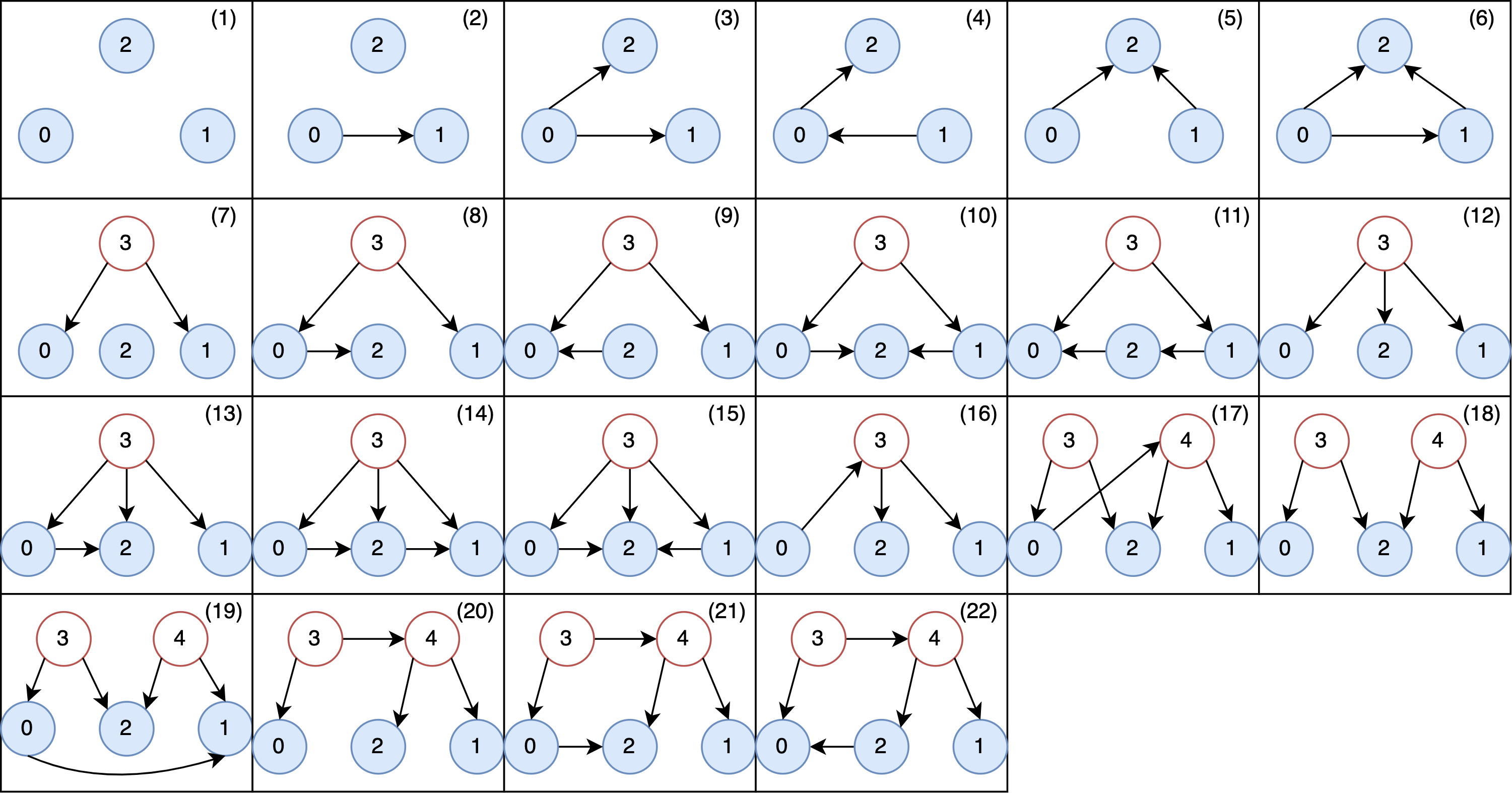}
\caption{Illustration of 22 identifiable graph types. The solid nodes with numbers 0, 1, 2 represent the observed variables, and the hollow nodes with numbers 3, 4 represent the latent variables.}\label{fig:graphtypes}\vspace{0.05in}
\end{figure*}

\begin{table*}[!h]
\begin{center}
\caption{Average SHD  on partially observable models.    \label{tab:result_latent}\vspace{-0.1in}}
\setlength{\tabcolsep}{0.45mm}{\small
\begin{tabular}{c|c|c|c|c|c|c|c|c|c|c|c|c|c|c|c|c|c|c|c|c|c|c|c}
\hline\hline
\multirow{2}{*}{\bf Sample}
& \multirow{2}{*}{\bf Method}
& \multicolumn{22}{c}{\bf Graph Type} \cr
\cline{3-24}
&& 1 & 2 & 3  & 4 & 5 & 6 & 7 & 8 & 9 & 10 & 11&12&13&14&15&16&17&18&19&20&21&22 \\  \hline
\multirow{2}{*}{5e3}
&B-S
&$\bf0.0$&$1.6$&$1.0$&$2.2$&$\bf0.0$
&$3.8$&$3.0$&$4.4$&$3.0$&$4.6$
&$4.5$&$3.5$&$4.6$&$3.8$&$6.6$
&$4.6$&$\bf5.6$&$6.0$&$6.1$&$5.4$
&$5.1$&$5.7$\cr
&\textsc{our}
&$\bf0.0$&$\bf0.0$&$\bf0.0$&$\bf0.0$&$\bf0.0$
&$\bf0.0$&$\bf0.0$&$\bf0.0$&$\bf0.0$&$\bf0.0$
&$\bf0.0$&$\bf0.0$&$\bf0.0$&$\bf0.0$&$\bf0.0$
&$\bf0.0$&$10.0$&$\bf0.0$&$\bf4.8$&$\bf0.0$
&$\bf1.3$&$\bf1.8$\\
\hline \hline
\multirow{2}{*}{1e4}
&B-S&$0.1$&$1.2$&$1.5$&$1.2$&$\bf0.0$&$4.4$&$3.0$&$4.0$&$3.0$&$4.8$&$4.7$&$5.0$&$5.0$&$4.8$&$6.2$&$4.3$&$\bf6.6$&$6.0$&$6.0$&$5.5$&$5.3$&$4.2$\cr
&\textsc{our}
&$\bf0.0$&$\bf0.0$&$\bf0.0$&$\bf0.0$&$\bf0.0$
&$\bf0.0$&$\bf0.0$&$\bf0.0$&$\bf0.0$&$\bf0.0$
&$\bf0.0$&$\bf0.0$&$\bf0.0$&$\bf0.0$&$\bf0.0$
&$\bf0.0$&$10.0$&$\bf0.0$&$\bf5.0$&$\bf0.0$
&$\bf1.2$&$\bf1.9$ \\
 \hline\hline
\end{tabular}
}
\end{center}
\end{table*}

We compare CDCF$^+$ with B-S~\citep{adams2021identification}, which is the most recent baseline and is applicable to equal variance and Gaussian noise cases without specific topology conditions. Algorithms like  FCI~\citep{spirtes1999algorithm}, RFCI~\citep{colombo2012learning}, and ICD~\citep{rohekar2021iterative} require specific topology conditions and can only recover the structure up to a partial ancestral graph, which is an equivalence class of the true causal graph, even in the situation that the causal graph is identifiable which is the setting of this experiment. Algorithms like FOFC~\citep{kummerfeld2016causal}, BPC~\citep{silva2006learning}, can only find the causal
clusters, i.e., locating latent variables, under the cases when the latent variable have at least three pure measurement variables. Besides, they can not discover the causal order of latent variables.

Table~\ref{tab:result_latent} shows that
our algorithm achieves significantly better performance and does not depend on the specified latent structures (e.g., confounder, mediator).
More experimental details are provided in  Appendix.

\section{Conclusion}\label{sec:conclusion}

We have proposed
two algorithms to tackle the DAG recovery problems with full/partial observations, respectively.
The first algorithm CDCF has theoretical guarantees for exact recovery and is better than
the existing methods in both time and sample complexities.
Experimental results on synthetic datasets and real-world data
demonstrate the efficiency and effectiveness of CDCF.
The second algorithm CDCF$^+$ is able to reveal latent variables and return a sparse DAG.
Numerical results indicate that CDCF$^+$  recovers most of the topology structures exactly,
and the performance is significantly better than previous algorithms.
Under what conditions (the ``restrictive'' assumption {\bf A3} is insufficient) Algorithm~\ref{alg2} is able to produce the true DAG requires a further investigation.


\bibliography{refs_scholar}
\bibliographystyle{plainnat}

\appendix

\section{Proofs}\label{adx:theory}
Our proofs for Theorems~2.1 and 2.2 are based on the perturbation analysis for the Cholesky factorization
of the covariance matrix, the following lemma plays the central role.

\begin{lemma}\label{lem:ll}
Let $\rvx\in\R^p$ be a zero-mean random vector, $\mSigma_{xx}=\E(\rvx \rvx^{\T})\in\R^{p\times p}$ be the covariance matrix, $\wtd\mSigma_{xx}$ be an estimator for $\mSigma_{xx}$ satisfying that
\begin{equation*}
\max_i|[\mSigma_{xx}]_{ii}-[\wtd\mSigma_{xx}]_{ii}\|\le \epsilon_d,\qquad
\|\mSigma_{xx}-\wtd\mSigma_{xx}\|\le \epsilon_2,
\end{equation*}
for some $\epsilon_d$, $\epsilon_2>0$.
Let the Cholesky factorizations of $\mSigma_{xx}$ and $\wtd\mSigma_{xx}$ be
$\mSigma_{xx}=\mU^{\T}\mU$ and $\wtd\mSigma_{xx}=\wtd\mU^{\T} \wtd\mU$, respectively,
where $\mU$ and $\wtd\mU$ are both upper triangular.
Then
\begin{align}
&|\|\mU_{:,i}\|^2 - \|\wtd\mU_{:,i}\|^2|\le \epsilon_d,
 &&\mbox{for $1\le i\le p$};
\label{ll1}\\
&|[\mU^{-1}]_{ij} - [\wtd\mU^{-1}]_{ij}|
\le \|\wtd\mU^{-1}\|_{2,\infty}\|\mU^{-\T}\|_{2,\infty} \sqrt{2\epsilon_2},
&&\mbox{for $i>j$}.\label{ll2}
\end{align}
\end{lemma}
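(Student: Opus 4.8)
The plan is to treat the two bounds separately, since they concern different objects: \eqref{ll1} is about the columns of the Cholesky factor itself, while \eqref{ll2} is about the inverse factor. For \eqref{ll1}, the key observation is that the squared norm of the $i$th column of $\mU$ can be read off directly from the diagonal of $\mSigma_{xx}$: indeed, since $\mSigma_{xx}=\mU^\T\mU$, we have $[\mSigma_{xx}]_{ii}=\sum_{k}[\mU]_{ki}^2=\|\mU_{:,i}\|^2$, and likewise $\|\wtd\mU_{:,i}\|^2=[\wtd\mSigma_{xx}]_{ii}$. Hence $|\|\mU_{:,i}\|^2-\|\wtd\mU_{:,i}\|^2| = |[\mSigma_{xx}]_{ii}-[\wtd\mSigma_{xx}]_{ii}|\le\epsilon_d$, which is immediate from the hypothesis. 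This first part is essentially a one-line calculation.

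For \eqref{ll2}, I would work with the inverse factors directly. Write $\mR=\mU^{-1}$ and $\wtd\mR=\wtd\mU^{-1}$, both upper triangular, so that $\mR^\T\mR = \mSigma_{xx}^{-1}$ isn't quite what we want — instead I would use the identity relating the two inverses via the original matrices. The cleanest route is: since $\mU=\wtd\mU + (\mU-\wtd\mU)$, one has $\mR - \wtd\mR = \wtd\mR(\wtd\mU-\mU)\mR = -\wtd\mR(\mU-\wtd\mU)\mR$. But $\mU-\wtd\mU$ is not controlled directly by $\epsilon_2$; what is controlled is $\mSigma_{xx}-\wtd\mSigma_{xx}=\mU^\T\mU-\wtd\mU^\T\wtd\mU$. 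So I would instead write $\mSigma_{xx}-\wtd\mSigma_{xx} = \mU^\T\mU - \wtd\mU^\T\wtd\mU$ and pre/post-multiply by the inverse factors to get
\begin{equation*}
\wtd\mR^\T(\mSigma_{xx}-\wtd\mSigma_{xx})\wtd\mR = \wtd\mR^\T\mU^\T\mU\wtd\mR - \mI = (\mU\wtd\mR)^\T(\mU\wtd\mR) - \mI.
\end{equation*}
Setting $\mM:=\mU\wtd\mR = \mU\wtd\mU^{-1}$, which is upper triangular with positive diagonal, this says $\mM^\T\mM - \mI = \wtd\mR^\T(\mSigma_{xx}-\wtd\mSigma_{xx})\wtd\mR$, a symmetric matrix of spectral norm at most $\|\wtd\mR\|^2\epsilon_2$. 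The standard perturbation fact that $\mM^\T\mM$ close to $\mI$ forces $\mM$ close to $\mI$ (for triangular $\mM$ with positive diagonal, $\|\mM-\mI\|\le \|\mM^\T\mM-\mI\|$ up to the usual factor, or more carefully $\|\mM-\mI\|_F^2\le$ something times $\|\mM^\T\mM-\mI\|_F^2$) then gives $\|\mM-\mI\|\le\sqrt{2}\,\|\wtd\mR\|\,\|\mU^{-\T}\|\cdot\sqrt{\epsilon_2}$-type control. Finally $\mR - \wtd\mR = \wtd\mR(\mM - \mI)$ wait — check: $\mR = \mU^{-1} = \wtd\mR \mM^{-1}$? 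No: $\mM=\mU\wtd\mR$ so $\mU^{-1}=\wtd\mR\mM^{-1}$, i.e. $\mR = \wtd\mR\mM^{-1}$, hence $\mR-\wtd\mR = \wtd\mR(\mM^{-1}-\mI)$ and $\|\mM^{-1}-\mI\|\approx\|\mM-\mI\|$ for $\mM$ near $\mI$. Taking the $2,\infty$ norm of the left factor and spectral norm of the right gives the entrywise bound $|[\mR]_{ij}-[\wtd\mR]_{ij}|\le\|\wtd\mR\|_{2,\infty}\|\mM^{-1}-\mI\|$, and substituting the estimate for $\|\mM-\mI\|$ yields \eqref{ll2}.

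The main obstacle I anticipate is the sharp form of the "$\mM^\T\mM$ near $\mI$ implies $\mM$ near $\mI$" step with the precise constant $\sqrt{2}$ and the exact norms appearing in \eqref{ll2} — in particular getting the bound to come out with $\|\mU^{-\T}\|_{2,\infty}$ rather than the full spectral norm $\|\mU^{-1}\|$ on one of the factors. This likely requires exploiting the triangular structure of $\mM$ row by row (a recursive/inductive argument on the columns of $\mM-\mI$, peeling off the diagonal entries using \eqref{ll1}-type control, then the off-diagonals), rather than a black-box matrix perturbation bound. I would structure that sub-argument as an induction on the column index, handling the diagonal entry of $\mM$ first (it is $\|\mU_{:,i}\|/\|\wtd\mU_{:,i}\|$-related, so close to $1$ by part one) and then bounding the strictly-upper entries of the $i$th column of $\mM-\mI$ using the corresponding block of $\mM^\T\mM-\mI$.
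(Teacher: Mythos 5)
Your part for \eqref{ll1} and your overall strategy for \eqref{ll2} are essentially the paper's: the paper also reduces \eqref{ll2} to showing that the triangular transfer factor between the two Cholesky factors is close to the identity, via the congruence $\mU^{-\T}(\wtd\mSigma_{xx}-\mSigma_{xx})\mU^{-1}$. Two concrete differences are worth noting. First, the paper writes $\wtd\mU\mU^{-1}=\mI+\mF$ (the opposite order from your $\mM=\mU\wtd\mU^{-1}$), which yields the exact identity $\mU^{-1}-\wtd\mU^{-1}=\wtd\mU^{-1}\mF$ and so avoids the extra step of passing from $\mM-\mI$ to $\mM^{-1}-\mI$; you noticed the ordering issue mid-argument but settled on the version that still needs that inversion.

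Second, and more importantly, the step you flag as the main obstacle --- obtaining $\|\mU^{-\T}\|_{2,\infty}$ rather than a spectral norm --- is resolved in the paper by a single direct observation, not by the recursive column induction you sketch. Writing $(\mI+\mF)^{\T}(\mI+\mF)=\mI+\mE$ with $\mE=\mU^{-\T}(\wtd\mSigma_{xx}-\mSigma_{xx})\mU^{-1}$, only the \emph{diagonal} entries of $\mE$ are ever needed: since $\mF$ is upper triangular, the $j$th diagonal entry of the left-hand side is exactly the squared column norm, giving $(1+\mF_{jj})^2+\|\mF_{1:j-1,j}\|^2=1+\mE_{jj}$, while $\mE_{jj}$ is the quadratic form of $\wtd\mSigma_{xx}-\mSigma_{xx}$ at the $j$th column of $\mU^{-1}$ and hence is at most $\epsilon_2\|\mU^{-\T}\|_{2,\infty}^2$ column by column. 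From this one extracts $|\mF_{jj}|\le\sqrt{1+\mE_{jj}}-1\le\tfrac12\|\mU^{-\T}\|_{2,\infty}^2\epsilon_2$ and then $\mF_{jj}^2+\|\mF_{1:j-1,j}\|^2\le 2|\mF_{jj}|+\mE_{jj}\le 2\|\mU^{-\T}\|_{2,\infty}^2\epsilon_2$, and the entrywise bound $|[\wtd\mU^{-1}\mF]_{ij}|\le\|\wtd\mU^{-1}\|_{2,\infty}\|\mF_{1:j,j}\|$ finishes with the stated constant $\sqrt{2}$. So your plan is sound and correctly identifies both the right intermediate object and the location of the difficulty, but as written it does not yet establish \eqref{ll2}; filling the gap requires only the diagonal-of-the-Gram-identity observation above rather than the full inductive peeling you anticipated.
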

\begin{proof}
For all $1\le i \le p$, we have
\begin{equation}\label{eq:cc}
|\|\mU_{:,i}\|^2 - \|\wtd\mU_{:,i}\|^2|
=|[\mSigma_{xx}]_{ii}-[\wtd\mSigma_{xx}]_{ii}|
\le \epsilon_d,
\end{equation}
which completes the proof for \eqref{ll1}.

Next, we show \eqref{ll2}.
Let
\begin{equation}\label{eq:ef}
\wtd\mU \mU^{-1} = \mI+\mF,\qquad
(\mI + \mF)^{\T}(\mI +\mF) = \mI+\mE,
\end{equation}
where $\mF$ is upper triangular.
We know that
\begin{align}
&\mU^{-1} - \wtd\mU^{-1}
= \wtd\mU^{-1}\mF,\label{eq:ll}\\
&\mE = \mU^{-\T} \wtd\mU^{\T} \wtd\mU \mU^{-1} - \mI
= \mU^{-\T}(\wtd\mSigma_{xx} - \mSigma_{xx}) \mU^{-1}.\label{eq:e}
\end{align}
Then it follows from \eqref{eq:ll} that for any $i>j$ it holds that
\begin{equation}\label{llmax}
|[\mU^{-1}]_{ij} - [\wtd\mU^{-1}]_{ij}|
\le \|\wtd\mU^{-1}\|_{2,\infty}\|\mF_{1:j,j}\|.
\end{equation}

It follows from the second equality of \eqref{eq:ef} that
\begin{equation}\label{2f}
 (1+\mF_{jj})^2 + \|\mF_{1:j-1,j}\|^2 = 1+\mE_{jj}.
\end{equation}
Therefore,
\begin{align}
|\mF_{jj}|&\le \sqrt{1+\mE_{jj}}-1
\stackrel{(a)}{\le} \sqrt{1+\|\mU^{-\T}\|_{2,\infty}^2 \epsilon_2} - 1
\le \frac12 \|\mU^{-\T}\|_{2,\infty}^2 \epsilon_2,\label{fjj}
\\
\mF_{jj}^2+\|\mF_{1:j-1,j}\|^2
&\stackrel{(b)}{\le}
 2|\mF_{jj}| + \|\mU^{-\T}\|_{2,\infty}^2 \|\wtd\mSigma_{xx}-\mSigma_{xx}\|
\stackrel{(c)}{\le} 2 \|\mU^{-\T}\|_{2,\infty}^2 \epsilon_2, \label{f2inf}
\end{align}
where (a) uses \eqref{eq:e} and $\|\mSigma_{xx}-\wtd\mSigma_{xx}\|_2\le \epsilon$,
(b) uses \eqref{2f},
(c) uses \eqref{fjj} and \eqref{eq:e} and $\|\mSigma_{xx}-\wtd\mSigma_{xx}\|\le \epsilon_2$.
Substituting \eqref{f2inf} into \eqref{llmax}, we obtain
\begin{align*}
|[\mU^{-1}]_{ij} - [\wtd\mU^{-1}]_{ij}|
&\le \|\wtd\mU^{-1}\|_{2,\infty}\sqrt{\mF_{jj}^2+\|\mF_{1:j-1,j}\|^2}
\le \|\wtd\mU^{-1}\|_{2,\infty}\|\mU^{-\T}\|_{2,\infty} \sqrt{2\epsilon_2}.
\end{align*}
This completes the proof.
\end{proof}

\vspace{0.1in}
\noindent{\bf Theorem 2.1}\;
For the linear SEM model \eqref{model},
assume \eqref{wptp}, {\bf A1} and {\bf A2}.
Let $\wtd\mSigma_{xx}$ be an estimator for $\mSigma_{xx}=[\E(\emX_j\emX_k)]$ and
$\max_i|[\mSigma_{xx}]_{ii}-[\wtd\mSigma_{xx}]_{ii}|\le \epsilon_d$
for some $\epsilon_d>0$.
If
$\epsilon_d < \frac{\Delta}{4}$,
Algorithm~\ref{alg1} recovers the ordering exactly.

\begin{proof}
$\wtd{\underline{\vi}}=[\wtd i_1,\dots,\wtd i_p]=[1,\dots,p]\wtd\mP$,
where 
$\wtd\mP$ is the output of Algorithm~\ref{alg1}.
Denote $
\wtd{\underline{\vi}}_k=[\wtd i_1,\dots,\wtd i_k]$,
$\mU=\diag(\sigma_{i_1},\dots,\sigma_{i_p})(\mI-\mT)^{-1}$,
$\vu_k=\mU_{1:k-1,k}$,
$\wtd\vu_k = [\wtd\mU_p]_{1:k-1,k}$.

\vspace{0.1in}
In Algorithm~\ref{alg1},
we have
\begin{equation}\label{mc}
[\wtd\mSigma_{xx}]_{\wtd{\underline{\vi}},\wtd{\underline{\vi}}}
= \wtd\mU_p^{\T}\wtd\mU_p.
\end{equation}
Consider the $k$th diagonal entries of $[\wtd\mSigma_{xx}]_{\wtd{\underline{\vi}},\wtd{\underline{\vi}}}$.
By calculations, we get
\begin{align}
[\wtd\mSigma_{xx}]_{\wtd i_k,\wtd i_k}
= [\wtd\mU_p]_{kk}^2  +  \|\wtd\vu_k\|^2. \label{ckk2}
\end{align}
Recall \eqref{chol}, we have
\begin{equation}\label{sigxxstar}
[\mSigma_{xx}]_{i_k,i_k} = \sigma_{i_k}^2 +\|\vu_k\|^2.
\end{equation}

Next, we show that all root nodes can be found by Algorithm~\ref{alg1}.
Let $i=i_s$ be a root node, $j=i_t$ be not, by calculations, we have
\begin{equation*}
[\wtd\mSigma_{xx}]_{jj}
\stackrel{(a)}{\ge} [\mSigma_{xx}]_{jj} - \epsilon_d
\stackrel{(b)}{=} \sigma_{i_t}^2 + \|\vu_t\|^2  - \epsilon_d
\stackrel{(c)}{\ge} [\mSigma_{xx}]_{ii}
+\sigma_{i_t}^2 - \sigma_{i_s}^2+\|\vu_t\|^2 - \epsilon_d
\stackrel{(d)}{\ge} [\wtd\mSigma_{xx}]_{ii} + \Delta -2\epsilon_d
\stackrel{(e)}{>} [\wtd\mSigma_{xx}]_{ii},
\end{equation*}
where (a) uses Lemma~\ref{lem:ll},
(b), (c) use \eqref{sigxxstar},
(d) uses Lemma~\ref{lem:ll}, {\bf A2},
(e) is due to $\epsilon<\frac12 \Delta$.
So we have
$\{\wtd i_1,\dots,\wtd i_{p_1}\} = \{i_1,\dots,i_{p_1}\}$, i.e.,
all root nodes can be found.

\newpage

Now assume that Algorithm~\ref{alg1} is able to find the nodes in the first $\ell$ ($1\le \ell <r$) layers, layer by layer, specifically,
$\{\wtd i_1,\dots,\wtd i_{p_1}\} = \{i_1,\dots,i_{p_1}\}$, $\ldots$,
$\{\wtd i_{p_1+\cdots+p_{\ell-1}+1},\dots,\wtd i_{p_1+\cdots+p_{\ell}}\} = \{i_{p_1+\cdots+p_{\ell-1}+1},\dots,i_{p_1+\cdots+p_{\ell}}\}$.
We show that all nodes in the $\ell+1$st layer
can also be found by Algorithm~\ref{alg1}.
Let $i=i_s$ be a node in the $\ell+1$st layer,
$j=i_t$ be not.
We have
\begin{align}
[\wtd\mSigma_{xx}]_{jj} - \|[\wtd\vu_t]_{1:j}\|^2
&\stackrel{(f)}{\ge} [\mSigma_{xx}]_{jj} - \|[\vu_t]_{1:j}\|^2 - 2\epsilon_d
\stackrel{(g)}{=} \sigma_{i_t}^2 +\|\vu_t\|^2 - \|[\vu_t]_{1:j}\|^2 - 2\epsilon_d\notag\\
&= \sigma_{i_t}^2 +\|[\vu_t]_{j+1:t-1}\|^2  - 2\epsilon_d
\stackrel{(h)}{\ge} \sigma_{i_s}^2 + \Delta   - 2\epsilon_d
\stackrel{(i)}{=}[\mSigma_{xx}]_{ii} - \|\vu_s\|^2 + \Delta - 2\epsilon_d\notag\\
&\stackrel{(j)}{\ge} [\wtd\mSigma_{xx}]_{ii} -\|\wtd\vu_s\|^2 + \Delta -4\epsilon_d
\stackrel{(k)}{>} [\wtd\mSigma_{xx}]_{ii} -\|\wtd\vu_s\|^2,\notag
\end{align}
where (f), (j) use Lemma~\ref{lem:ll},
(g), (i) use \eqref{sigxxstar},
(h) uses {\bf A2},
(k) is due to $\epsilon<\frac14 \Delta$.
So all nodes in the $\ell+1$st layer can be found.
By mathematical induction, we get the conclusion.
\end{proof}

\vspace{0.1in}

\noindent{\bf Theorem 2.2}\;
Follow the notations and assumptions in Theorem~\ref{the:recover}.
Denote
\begin{equation*}
\mu=\|\mI-\mT\|_{2,\infty}\|\mI-\mT^{\T}\|_{2,\infty},  \quad
\rho = \frac{\max_i\sigma_i}{\min_i\sigma_i^2}, \quad
\omega=\min_{\mT_{ij}\ne 0}|\mT_{ij}|
\end{equation*}
Assume
$\|\mSigma_{xx}-\wtd\mSigma_{xx}\|\le \epsilon_2$ for some $\epsilon_2>0$.
If
$\epsilon_2\lesssim\frac{\omega^2}{8\mu^2\rho^2}$,
then Algorithm~\ref{alg1}
 recovers the graph structure exactly.

\begin{proof}
Recall that
\begin{equation*}
\mP^{\T}\mSigma_{xx}\mP = \mU^{\T}\mU,\quad
\wtd\mP^{\T}\wtd\mSigma_{xx}\wtd\mP= \wtd\mU_p^{\T}\wtd\mU_p,
\end{equation*}
where $\mU= \mP^{\T}\mSigma_{nn}^{\frac12}\mP (\mI-\mT)^{-1}$.
Rewrite the first equality as
\begin{equation}\label{cholpsp}
\wtd\mP^{\T}\mSigma_{xx}\wtd\mP =  \wtd\mP^{\T}\mP \mP^{\T}\mSigma_{xx}\mP\mP^{\T}\wtd\mP
=(\bm\Pi^{\T} \mU^{\T}\bm\Pi)  (\bm\Pi^{\T}\mU \bm\Pi),
\end{equation}
where $\bm\Pi=\mP^{\T}\wtd\mP$.
By Lemma~\ref{lem:ll},   $\bm\Pi$ is block diagonal and corresponds with the within layer permutation.
Recall the structure of $\mT$ in \eqref{tblk},  we know that $\bm\Pi^{\T}\mU \bm\Pi$ is upper triangular,
which implies that \eqref{cholpsp} is a Cholesky factorization.
Then it follows from Lemma~\ref{lem:ll} that
\begin{equation*}
\|\wtd\mU_q^{-1} - (\bm\Pi^{\T}\mU \bm\Pi)^{-1}\|_{\max}
\le \|\wtd\mU_p^{-1} \|_{2,\infty} \|\mU^{-\T}\|_{2,\infty}\sqrt{2\epsilon_2}.
\end{equation*}
Since $\wtd\mU_p^{-1}\rightarrow \mU^{-1}$ as $\epsilon_2\rightarrow 0$,
we have $\mA:=  \Pi \diag(\wtd\mU_p^{-1})  \Pi^{\T} \rightarrow \diag(\sigma_{i_1}^{-1},\dots,\sigma_{i_p}^{-1})$, and
\begin{equation*}
\|\wtd\mU_q^{-1} - \bm\Pi^{\T}\mU^{-1} \bm\Pi \|_{\max}
\lesssim \|\mU^{-1} \|_{2,\infty} \|\mU^{-\T}\|_{2,\infty}\sqrt{2\epsilon_2}
\le \frac{1}{\min_i\sigma_i^2} \|\mI-\mT\|_{2,\infty} \|\mI-\mT^{\T}\|_{2,\infty}\sqrt{2\epsilon_2}
=\frac{\mu}{\min_i\sigma_i^2} \sqrt{2\epsilon_2}.
\end{equation*}
Then it follows that
\begin{equation*}
\|\bm\Pi \textsc{triu}(\wtd\mU_p^{-1}) \bm\Pi^{\T} - \mT \diag(\sigma_{i_1}^{-1},\dots,\sigma_{i_p}^{-1}) \|_{\max}
\lesssim
\frac{\mu}{\min_i\sigma_i^2} \sqrt{2\epsilon_2}.
\end{equation*}
And hence
\begin{equation*}
\|\bm\Pi \textsc{triu}(\wtd\mU_p^{-1}\mA) \bm\Pi^{\T} - \mT  \|_{\max}
\lesssim
\mu\rho\sqrt{2\epsilon_2}.
\end{equation*}

Let $\mu\rho \sqrt{2\epsilon_2}< \frac{\omega}{2}$,
we can recover the graph structure by truncating $\textsc{triu}(\wtd\mU_q^{-1}\mA)$.
The proof is completed.
\end{proof}

\noindent{\bf Proof of the sample complexity for exact ordering recovery}.
Denote $\gM=\max_i[\mSigma_{xx}]_{ii}$. Assume $\Delta=\gO(1)$.
For the random variable $\emX_i^2$, we have $0 \le \mathbb{E}(\emX_i^2) \le \gM<\infty$.
By Hoeffding's inequality, we get
\begin{equation*}
\mathbb{P}(|[\wtd\mSigma_{xx}]_{ii} - [\mSigma_{xx}]_{ii}|>t)\le 2\exp(-\frac{2nt^2}{\gM^2}).
\end{equation*}
Set $\epsilon=2\exp(-\frac{2nt^2}{\gM^2})$ and $t=\frac{\Delta}{4}$, we get
$n=\gO(\gM^2\log\frac{1}{\epsilon})$.
In other words, w.p. $\ge 1-\epsilon$, $|[\wtd\mSigma_{xx}]_{ii} - [\mSigma_{xx}]_{ii}|\le \frac{\Delta}{4}$ for $n=\gO(\gM^2\log\frac{1}{\epsilon})$.
Therefore, w.p. $\ge 1-\epsilon$, for all $1\le i\le p$, $|[\wtd\mSigma_{xx}]_{ii} - [\mSigma_{xx}]_{ii}|\le \frac{\Delta}{4}$ for $n=\gO(\gM^2\log\frac{p}{\epsilon})$.
The conclusion follows.

It is worth mentioning here that if we assume $\Delta=\gO(\gM)$ instead, the $\gM$ factor in the sample complexity can be removed.

\vspace{0.2in}
\noindent{\bf Proof of the sample complexity for exact graph structure recovery}.\  When the noise is sub-Gaussian, it holds with probability at least $1-2\exp(-ct^2)$ that
\begin{equation*}
\|\wtd\mSigma_{xx}-\mSigma_{xx}\|\le \|(\mI-\mT)^{-1}\|^2\max\{\delta,\delta^2\},
\end{equation*}
where $\delta=C\sqrt{\frac{p}{n}}+\frac{t}{\sqrt{n}}$, $c$ and $C$ are two constants.
For simplicity, let $\delta\ge 1$.
Set $2\exp(-ct^2)=\epsilon$, we get
\begin{equation*}
t=\gO(\sqrt{\log\frac{1}{\epsilon}}),\qquad
\delta = \gO(\sqrt{\frac{p}{n}}+\sqrt{\frac{1}{n}\log\frac{1}{\epsilon}}).
\end{equation*}
Then w.p. $\ge 1-\epsilon$, it holds
\begin{equation*}
\|\wtd\mSigma_{xx}-\mSigma_{xx}\|
\le \|(\mI-\mT)^{-1}\|^2\delta^2
\le \|(\mI-\mT)^{-1}\|^2\gO(\frac{p+\log\frac{1}{\epsilon}}{n}).
\end{equation*}
Let the right hand side be no more than $\frac{\omega^2}{8\mu^2\rho^2}$, we get
$n\ge\gO((p+\log\frac{1}{\epsilon}) \rho^2\mu^2 \|(\mI-\mT)^{-1}\|^2/\omega^2)$.
Sample complexities for other distributions of the noise can be  obtained similarly.

\vspace{0.2in}
\noindent{\bf Proof of Proposition 1}.\quad
Direct calculations give rise to
{\bf (a)} and {\bf (b)}.  Next,  we only show {\bf (c)}.

Using
$\mL^{\T}\mL=\mI+\vw\vw^{\T}$, we have
\begin{align*}
\prod_{i=1}^k \mL_{ii}^2=\det([\mL^{\T}\mL]_{1:k,1:k})
=\det(\mI+\vw_{1:k}\vw_{1:k}^{\T})
=1+\vw_{1:k}^{\T}\vw_{1:k}\footnotemark
=1+\|\vw_{1:k}\|^2.
\end{align*}
Note that, for any nonzero vector $\vx\in\R^p$, it holds $(\mI+\vx\vx^{\T})\vx=(1+\|\vx\|^2)\vx$,
$(\mI+\vx\vx^{\T})\vy=\vy$ for any $\vy^{\T}\vx=0$.
In other words, the eigenvalues of $\mI+\vx\vx^{\T}$ are $1+\|\vx\|^2$, $1,\dots,1$.
Therefore, $\det(\mI+\vx\vx^{\T})=\text{the product of all eigenvalues}=1+\|\vx\|^2$.
The conclusion follows immediately.

\vspace{0.2in}
\noindent{\bf Convergence of Algorithm~2.}\;
In Proposition~2, set the diagonal entries of $\wht\mU_{22}$ to $\wht\sigma$, then
one can construct
a unique vector $\vw$ and an upper triangular matrix $\mL$ with $\mV_{ii}=[\mU_{22}]_{ii}/\wht\sigma$ for all $i$ such that $\mL^{\T}\mL = \mI + \vw\vw^{\T}$.
Set $\wht\mU_{22}=\mL^{-1}\mU_{22}$, we know that the diagonal entries of $\wht\mU_{22}$ are all $\wht\sigma$.
Therefore, let $j$, $\underline{\vi}$ be the same as in Lines 7 and 10 of Algorithm~2, respectively,
we know that we can invert one row and column into $\wtd\mSigma_{xx}$ such that the diagonal entries of the Cholesky factor of $[\wtd\mSigma_{xx}]_{\underline{\vi},\underline{\vi}}$, denoted by $\alpha_1,\dots,\alpha_q$, satisfy
\begin{align}
&\frac{1}{\alpha_i}\ge\frac{1-\zeta}{\wht\sigma}, \quad \mbox{ for $i=1,\dots,j-1$};\qquad  \frac{1}{\alpha_i}=\frac{1}{\wht\sigma}> \frac{1-\zeta}{\wht\sigma}, \quad\mbox{ for $i=j,\dots,q$.}  \label{alpha}
\end{align}

By continuity of the optimization problem \eqref{opt} w.r.t. $\mu$, we take $\alpha_1,\dots,\alpha_q$ as functions of $\mu$.
For $\mu=0$, \eqref{opt} has a solution $\mS$ such that \eqref{alpha} holds.
So, for a sufficiently small $\mu$,
\eqref{opt} has a solution $\mS$ such that
\begin{align}
\frac{1}{\alpha_i}\ge\frac{1-\zeta}{\wht\sigma}, \quad \mbox{ for $i=1,\dots,j-1$};\qquad \frac{1}{\alpha_i}> \frac{1-\zeta}{\wht\sigma}, \quad\mbox{ for $i=j,\dots,q$.}  \label{alpha2}
\end{align}
This completes the proof.

\section{Additional Experiments}\label{adx:additional_experiments}

\subsection{Further Experiments on CDCF}\label{sec:diag_param}

\paragraph{Experiments with non-Gaussian Noise}
Figure~\ref{fig:compare_gumbel} and Figure~\ref{fig:compare_exp} provide the results of Gumbel and Exponential noises, respectively. As we can see from the result, our algorithm still performs better than the Eqvar method in different noise types.

\begin{figure}[ht]

\vspace{-0.1in}

\begin{center}
\includegraphics[width=6.5in]
{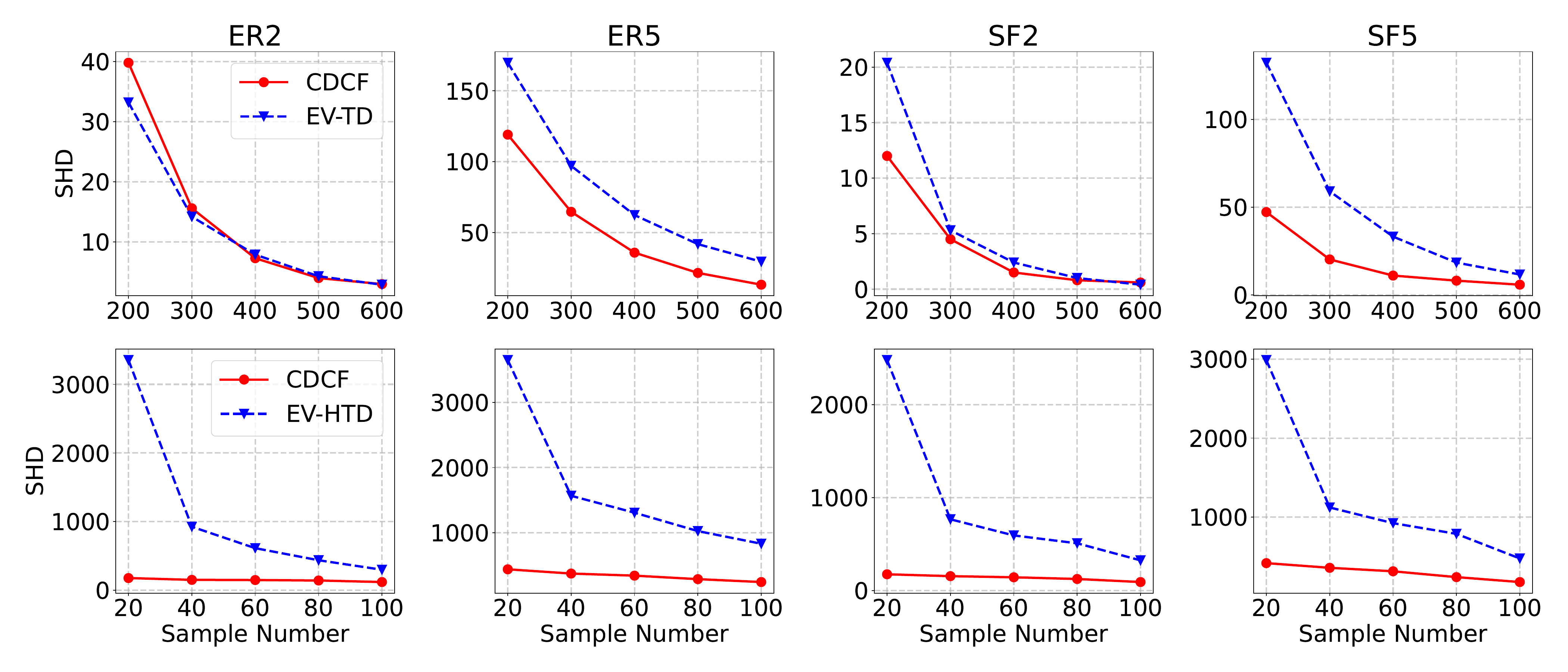}
\end{center}

 \vspace{-0.3in}

\caption{Performance (SHD)  on a 100-node graph, with the gumbel noise.} \label{fig:compare_gumbel}\vspace{-0.2in}
\end{figure}

\begin{figure}[h]
\begin{center}
\includegraphics[width=6.5in]
{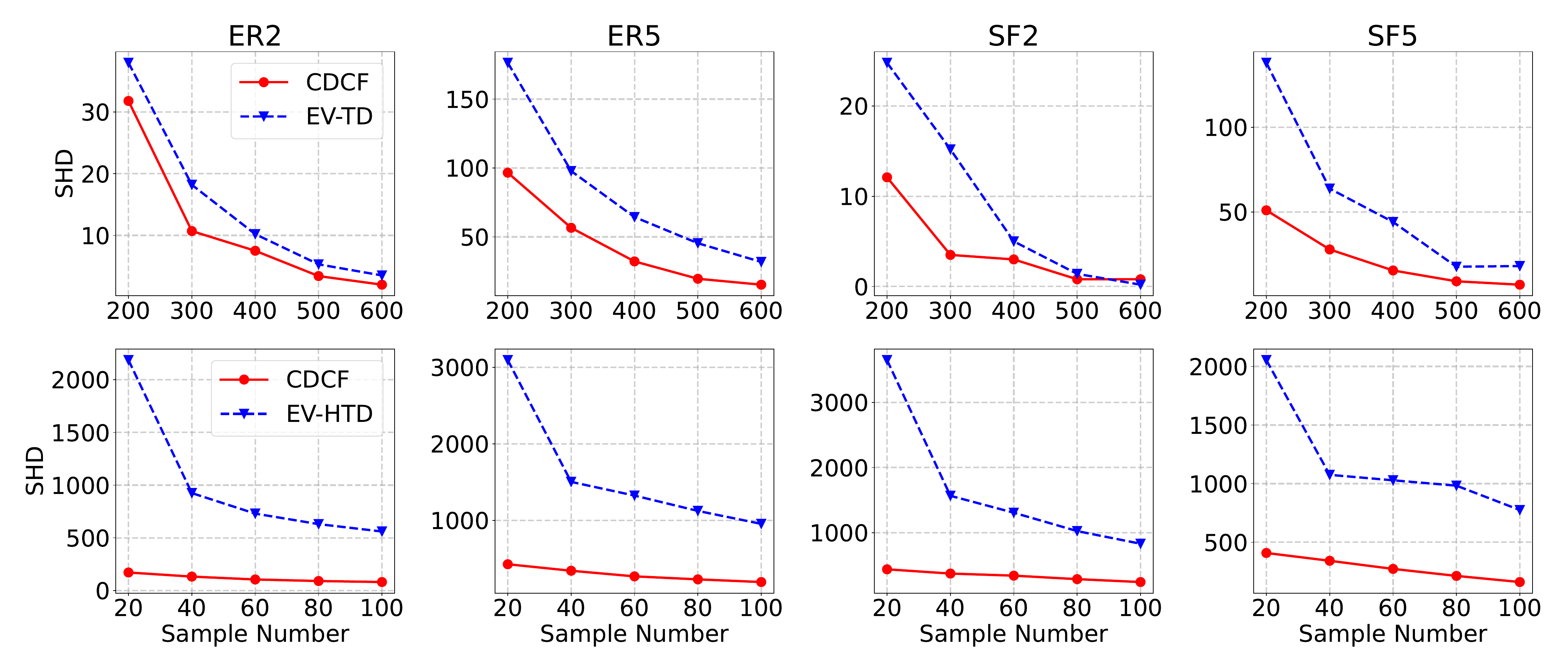}
\end{center}

 \vspace{-0.3in}

\caption{Performance (SHD) on 100 nodes graph, with the exponential noise.} \label{fig:compare_exp}\vspace{-0.3in}
\end{figure}

\begin{table}[h!]

\begin{center}
\caption{CDCF SHD results on with varying $\gamma$ and sample sizes on 100 nodes linear Gaussian SEM. \label{tab:lambda_vp_gauss}}\vspace{0.1in}
\setlength{\tabcolsep}{0.5mm}{
\begin{tabular}{c|c|c|c|c|c|c|c|c|c|c|c|c|c|c}
\hline\hline
& CDCF& \bf 200& \bf 300 & \bf 400 & \bf 500 &\bf 600 & \bf 700 & \bf 800 & \bf 900 & \bf 1000 & \bf 1500 & \bf 2000  & \bf 2500 & \bf 3000 \\  \hline
\multirow{10}{*}{ER2}
&$\gamma=0.0$&$114.3$&$32.2$&$11.7$&$4.6$&$2.5$&$1.4$&$\bf1.1$&$\bf0.5$&$\bf0.4$&$\bf0.3$&$\bf0.0$&$\bf0.0$&$\bf0.0$\cr

&$\gamma=1.0$&$21.9$&$7.9$&$4.3$&$\bf2.8$&$\bf2.0$&$\bf1.3$&$1.4$&$\bf0.5$&$\bf0.4$&$\bf0.3$&$\bf0.0$&$\bf0.0$&$\bf0.0$\cr

&$\gamma=2.0$&$\bf13.6$&$\bf7.2$&$\bf4.0$&$2.9$&$2.1$&$1.7$&$1.4$&$0.9$&$0.7$&$\bf0.3$&$\bf0.0$&$\bf0.0$&$\bf0.0$\cr

&$\gamma=3.0$&$14.8$&$8.1$&$4.5$&$3.5$&$2.7$&$2.0$&$1.8$&$1.4$&$0.7$&$\bf0.3$&$\bf0.0$&$\bf0.0$&$\bf0.0$\cr

&$\gamma=4.0$&$18.3$&$9.9$&$7.2$&$4.9$&$3.1$&$3.0$&$2.3$&$1.4$&$0.8$&$0.6$&$\bf0.0$&$\bf0.0$&$\bf0.0$\cr

&$\gamma=5.0$&$23.0$&$12.3$&$9.3$&$6.5$&$4.1$&$3.8$&$2.4$&$1.6$&$1.0$&$0.8$&$0.1$&$\bf0.0$&$\bf0.0$\cr

&$\gamma=6.0$&$27.0$&$14.9$&$11.4$&$7.7$&$4.5$&$4.4$&$3.6$&$1.9$&$1.7$&$0.8$&$0.1$&$0.1$&$\bf0.0$\cr

&$\gamma=7.0$&$33.0$&$19.1$&$13.7$&$9.1$&$6.0$&$4.9$&$4.3$&$2.5$&$2.4$&$0.9$&$0.2$&$0.1$&$\bf0.0$\cr

&$\gamma=8.0$&$37.3$&$22.1$&$15.5$&$10.2$&$7.0$&$5.7$&$5.1$&$2.9$&$3.1$&$1.2$&$0.2$&$0.1$&$\bf0.0$\cr

&$\gamma=9.0$&$42.5$&$25.1$&$17.6$&$11.6$&$8.6$&$6.9$&$5.8$&$3.6$&$3.4$&$1.2$&$0.2$&$0.1$&$0.1$\cr

&$\gamma=10.0$&$46.7$&$27.4$&$20.0$&$12.8$&$10.0$&$7.5$&$6.6$&$4.5$&$3.7$&$1.3$&$0.3$&$0.1$&$0.1$\\\hline

\multirow{10}{*}{SF2}
&$\gamma=0.0$&$37.1$&$7.2$&$1.9$&$2.6$&$\bf0.9$&$\bf0.3$&$\bf0.1$&$\bf0.1$&$0.2$&$\bf0.0$&$\bf0.0$&$\bf0.0$&$\bf0.0$\cr

&$\gamma=1.0$&$\bf8.8$&$\bf3.0$&$\bf1.6$&$\bf0.7$&$1.1$&$0.5$&$0.2$&$0.2$&$\bf0.1$&$\bf0.0$&$\bf0.0$&$\bf0.0$&$\bf0.0$\cr

&$\gamma=2.0$&$9.7$&$4.2$&$2.2$&$0.9$&$1.2$&$0.5$&$0.2$&$\bf0.1$&$\bf0.1$&$\bf0.0$&$\bf0.0$&$\bf0.0$&$\bf0.0$\cr

&$\gamma=3.0$&$11.5$&$6.2$&$2.9$&$1.4$&$1.2$&$0.4$&$0.3$&$\bf0.1$&$\bf0.1$&$\bf0.0$&$\bf0.0$&$\bf0.0$&$\bf0.0$\cr

&$\gamma=4.0$&$15.4$&$7.5$&$3.5$&$1.8$&$1.6$&$0.8$&$0.4$&$0.3$&$0.2$&$\bf0.0$&$\bf0.0$&$\bf0.0$&$\bf0.0$\cr

&$\gamma=5.0$&$18.1$&$8.9$&$4.3$&$2.7$&$2.1$&$1.3$&$0.5$&$0.5$&$0.4$&$\bf0.0$&$\bf0.0$&$\bf0.0$&$\bf0.0$\cr

&$\gamma=6.0$&$20.9$&$10.8$&$5.4$&$3.3$&$2.5$&$1.7$&$0.8$&$1.0$&$0.6$&$\bf0.0$&$\bf0.0$&$\bf0.0$&$\bf0.0$\cr

&$\gamma=7.0$&$23.5$&$13.1$&$6.7$&$5.2$&$4.9$&$2.7$&$1.5$&$1.3$&$0.7$&$\bf0.0$&$\bf0.0$&$\bf0.0$&$\bf0.0$\cr

&$\gamma=8.0$&$27.0$&$14.8$&$8.1$&$5.8$&$6.0$&$2.9$&$2.2$&$1.5$&$0.7$&$\bf0.0$&$\bf0.0$&$\bf0.0$&$\bf0.0$\cr

&$\gamma=9.0$&$29.4$&$16.8$&$9.8$&$6.7$&$6.7$&$3.5$&$2.7$&$2.0$&$1.1$&$\bf0.0$&$\bf0.0$&$\bf0.0$&$\bf0.0$\cr

&$\gamma=10.0$&$32.0$&$19.0$&$11.5$&$7.4$&$7.7$&$4.1$&$3.0$&$2.3$&$1.5$&$0.1$&$\bf0.0$&$\bf0.0$&$\bf0.0$\\\hline

\multirow{10}{*}{ER5}
&$\gamma=0.0$&$368.7$&$139.6$&$73.8$&$33.2$&$21.5$&$14.0$&$9.5$&$7.0$&$5.8$&$1.5$&$\bf0.6$&$\bf0.0$&$\bf0.0$\cr

&$\gamma=1.0$&$83.0$&$44.2$&$27.8$&$\bf17.5$&$\bf12.5$&$\bf9.2$&$\bf7.0$&$\bf5.1$&$\bf4.7$&$\bf1.3$&$0.7$&$\bf0.0$&$0.1$\cr

&$\gamma=2.0$&$\bf74.1$&$\bf41.7$&$\bf26.2$&$19.8$&$13.4$&$10.7$&$8.4$&$6.6$&$5.7$&$\bf1.3$&$1.1$&$\bf0.0$&$0.3$\cr

&$\gamma=3.0$&$82.1$&$52.3$&$33.1$&$23.0$&$18.2$&$14.5$&$11.9$&$8.0$&$7.2$&$2.4$&$1.6$&$0.2$&$0.4$\cr

&$\gamma=4.0$&$94.8$&$62.1$&$40.1$&$28.6$&$23.9$&$18.1$&$14.3$&$10.3$&$10.3$&$3.9$&$2.1$&$0.4$&$0.3$\cr

&$\gamma=5.0$&$109.7$&$72.3$&$51.4$&$37.6$&$30.1$&$22.6$&$17.6$&$14.1$&$12.8$&$5.6$&$2.6$&$0.8$&$0.4$\cr

&$\gamma=6.0$&$123.3$&$84.2$&$57.9$&$47.8$&$34.7$&$26.3$&$21.2$&$17.4$&$15.4$&$6.7$&$3.2$&$0.9$&$0.5$\cr

&$\gamma=7.0$&$140.3$&$100.2$&$67.9$&$53.1$&$39.9$&$31.2$&$25.0$&$20.2$&$18.2$&$7.8$&$3.7$&$1.0$&$0.7$\cr

&$\gamma=8.0$&$152.7$&$113.0$&$74.8$&$61.5$&$47.5$&$36.4$&$31.0$&$23.5$&$22.3$&$9.4$&$4.4$&$1.4$&$0.9$\cr

&$\gamma=9.0$&$162.5$&$122.0$&$83.1$&$67.7$&$51.6$&$42.3$&$34.6$&$28.5$&$25.7$&$10.7$&$4.9$&$1.9$&$1.4$\cr

&$\gamma=10.0$&$175.5$&$130.2$&$92.6$&$73.8$&$59.0$&$48.8$&$37.7$&$33.2$&$28.7$&$12.5$&$5.7$&$2.4$&$1.8$\\\hline

\multirow{10}{*}{SF5}
&$\gamma=0.0$&$92.6$&$31.9$&$\bf12.6$&$\bf6.0$&$\bf4.9$&$\bf3.9$&$\bf2.5$&$\bf1.8$&$\bf1.6$&$\bf0.3$&$\bf0.4$&$\bf0.0$&$\bf0.0$\cr

&$\gamma=1.0$&$\bf40.3$&$\bf21.0$&$15.6$&$8.3$&$7.2$&$5.3$&$4.1$&$2.3$&$2.8$&$0.5$&$\bf0.4$&$\bf0.0$&$\bf0.0$\cr

&$\gamma=2.0$&$57.2$&$35.7$&$23.3$&$16.1$&$11.2$&$9.6$&$7.5$&$5.9$&$5.1$&$1.7$&$0.6$&$\bf0.0$&$\bf0.0$\cr

&$\gamma=3.0$&$69.8$&$46.2$&$31.3$&$23.6$&$16.9$&$12.5$&$11.2$&$8.6$&$7.0$&$2.8$&$1.4$&$\bf0.0$&$0.1$\cr

&$\gamma=4.0$&$80.9$&$53.1$&$38.7$&$30.4$&$21.1$&$17.9$&$14.5$&$11.9$&$8.4$&$4.2$&$2.1$&$0.9$&$0.2$\cr

&$\gamma=5.0$&$90.6$&$66.3$&$43.7$&$36.3$&$25.8$&$24.5$&$20.3$&$16.6$&$12.1$&$5.7$&$3.3$&$1.6$&$0.4$\cr

&$\gamma=6.0$&$103.3$&$73.3$&$49.1$&$42.3$&$34.6$&$28.1$&$25.5$&$19.7$&$14.4$&$8.3$&$4.0$&$2.4$&$1.5$\cr

&$\gamma=7.0$&$110.3$&$79.4$&$55.9$&$45.6$&$38.0$&$31.6$&$29.1$&$24.6$&$19.9$&$10.3$&$5.1$&$3.6$&$2.1$\cr

&$\gamma=8.0$&$118.4$&$86.1$&$63.6$&$50.2$&$41.3$&$35.7$&$33.5$&$27.6$&$21.4$&$12.6$&$5.8$&$4.3$&$2.5$\cr

&$\gamma=9.0$&$124.0$&$92.8$&$68.7$&$54.1$&$46.5$&$39.4$&$36.3$&$30.2$&$25.0$&$13.9$&$7.2$&$4.7$&$3.2$\cr

&$\gamma=10.0$&$129.5$&$98.6$&$74.3$&$57.8$&$49.6$&$42.2$&$39.3$&$32.6$&$27.1$&$17.5$&$7.7$&$5.9$&$4.0$\\\hline

\hline


\end{tabular}
}
\end{center}
\end{table}

\newpage\clearpage

\noindent\textbf{Experiment on diagonal augmentation parameter.} \ We set the diagonal augmentation parameter $\lambda = \gamma \frac{\log(p)}{n}$.
The main result given in the Table~\ref{tab:baseline_results} and Figure~\ref{fig:compare} is tested with $\gamma = 1.0$.
We give the results in Table~\ref{tab:lambda_vp_gauss} on different choices of $\gamma$.

\subsection{Experiments Details with Unobserved Variables Settings}\label{sec:lat_exp_det}

\noindent\textbf{Data Generation.}\
Given the graph topology, we generate the dataset by linear SEM with equal variance Gaussian noise. The noise variance is set to $1.0$, and the edge weight is set to $1.0$.

\vspace{0.1in}
\noindent\textbf{Hyper Parameters.}\
We adopted Adam optimizer with learning rate initialized as 0.05, and decades every 100 gradient step with exponential scheduler with rate 0.99. The sparse loss coefficient $\mu$ is 0.05. The trainable parameters $\mS$ are initialized as $0.5$. For each latent variable, we take the result when the covariance loss less than 0.005 or at most 10 thousand gradient steps. The rounding threshold is set to $0.4$.
The results tested on different settings of hyperparameters (see Table~\ref{tab:latent_parameter} for details) are given in Table~\ref{tab:result_hp}. We can see from the table that the performance is not quite sensitive to the hyperparameters for most of the graph types.


\begin{table*}[h!]

\vspace{-0.1in}

\begin{center}
\caption{Parameter suites \label{tab:latent_parameter}}\vspace{0.1in}
\setlength{\tabcolsep}{0.5mm}{
\begin{tabular}{c|c|c|c|c|c|c}
\hline\hline

\bf Parameter Suite & \bf Optimizer & \bf Learning Rate & \bf Scheduler &  $\mu$ & $\zeta$ & \bf Sample (K)  \cr\hline
A & Adam & 0.05 & 0.99/100 & 0.05 & 0.1 & 5 \cr
B & Adam & 0.05 & 0.99/100 & 0.1 & 0.1 & 5 \cr
C & Adam & 0.05 & 0.99/100 & 0.05 & 0.05 & 5 \cr
D & Adam & 0.05 & 0.90/100 & 0.05 & 0.05  & 5\cr
E & SGD  & 0.005 & 0.99/100 & 0.01 & 0.05 & 5 \cr
F & Adam & 0.05 & 0.99/100 & 0.05 & 0.1  & 10\cr
G & Adam & 0.05 & 0.99/100 & 0.1 & 0.1  & 10\cr
H & Adam & 0.05 & 0.99/100 & 0.05 & 0.05  & 10\cr
I & Adam & 0.05 & 0.90/100 & 0.05 & 0.05  & 10\cr
J & SGD  & 0.005 & 0.99/100 & 0.01 & 0.05 & 10 \cr
K & Adam  & 0.05 & 0.99/100 & 0.1 & 0.1 & 50 \cr
\hline\hline
\end{tabular}
}
\end{center}\vspace{-0.1in}
\end{table*}

\begin{table*}[h!]

\vspace{-0.2in}

\begin{center}
\caption{SHD results on latent variables. Tested on different parameter suites. \label{tab:result_hp}}
\setlength{\tabcolsep}{0.5mm}{
\begin{tabular}{c|c|c|c|c|c|c|c|c|c|c|c|c|c|c|c|c|c|c|c|c|c|c}
\hline\hline
\multirow{2}{*}{\makecell{\bf{Parameter} \\ \bf Suite}}
& \multicolumn{22}{c}{\bf Graph Type} \cr
\cline{2-23}
& 1 & 2 & 3  & 4 & 5 & 6 & 7 & 8 & 9 & 10 & 11&12&13&14&15&16&17&18&19&20&21&22 \\  \hline


A&$\bf0.0$&$\bf0.0$&$\bf0.0$&$\bf0.0$&$\bf0.0$&$\bf0.0$&$\bf0.0$&$\bf0.0$&$\bf0.0$&$\bf0.0$&$\bf0.0$&$\bf0.0$&$\bf0.0$&$\bf0.0$&$\bf0.0$&$\bf0.0$&$10.0$&$\bf0.0$&$4.8$&$\bf0.0$&$1.3$&$1.8$\cr

B&$\bf0.0$&$\bf0.0$&$\bf0.0$&$\bf0.0$&$\bf0.0$&$\bf0.0$&$\bf0.0$&$\bf0.0$&$\bf0.0$&$\bf0.0$&$\bf0.0$&$\bf0.0$&$4.8$&$\bf0.0$&$\bf0.0$&$\bf0.0$&$9.5$&$\bf0.0$&$4.8$&$\bf0.0$&$0.9$&$3.7$\cr

C&$\bf0.0$&$\bf0.0$&$\bf0.0$&$\bf0.0$&$\bf0.0$&$\bf0.0$&$\bf0.0$&$\bf0.0$&$\bf0.0$&$\bf0.0$&$\bf0.0$&$\bf0.0$&$1.1$&$\bf0.0$&$\bf0.0$&$\bf0.0$&$3.4$&$\bf0.0$&$4.0$&$\bf0.0$&$1.9$&$1.9$\cr

D&$\bf0.0$&$\bf0.0$&$\bf0.0$&$\bf0.0$&$\bf0.0$&$\bf0.0$&$\bf0.0$&$\bf0.0$&$\bf0.0$&$\bf0.0$&$\bf0.0$&$\bf0.0$&$\bf0.0$&$\bf0.0$&$\bf0.0$&$\bf0.0$&$2.9$&$\bf0.0$&$3.9$&$\bf0.0$&$2.1$&$2.2$\cr

E&$\bf0.0$&$\bf0.0$&$\bf0.0$&$\bf0.0$&$\bf0.0$&$\bf0.0$&$\bf0.0$&$\bf0.0$&$\bf0.0$&$\bf0.0$&$\bf0.0$&$\bf0.0$&$3.5$&$\bf0.0$&$\bf0.0$&$2.0$&$10.0$&$\bf0.0$&$3.0$&$2.0$&$9.8$&$7.0$\cr

F&$\bf0.0$&$\bf0.0$&$\bf0.0$&$\bf0.0$&$\bf0.0$&$\bf0.0$&$\bf0.0$&$\bf0.0$&$\bf0.0$&$\bf0.0$&$\bf0.0$&$\bf0.0$&$\bf0.0$&$\bf0.0$&$\bf0.0$&$\bf0.0$&$10.0$&$\bf0.0$&$5.0$&$\bf0.0$&$1.2$&$1.9$\cr

G&$\bf0.0$&$\bf0.0$&$\bf0.0$&$\bf0.0$&$\bf0.0$&$\bf0.0$&$\bf0.0$&$\bf0.0$&$\bf0.0$&$\bf0.0$&$\bf0.0$&$\bf0.0$&$0.9$&$\bf0.0$&$\bf0.0$&$\bf0.0$&$10.0$&$\bf0.0$&$4.7$&$\bf0.0$&$0.9$&$3.4$\cr

H&$\bf0.0$&$\bf0.0$&$\bf0.0$&$\bf0.0$&$\bf0.0$&$\bf0.0$&$\bf0.0$&$\bf0.0$&$\bf0.0$&$\bf0.0$&$\bf0.0$&$\bf0.0$&$\bf0.0$&$\bf0.0$&$\bf0.0$&$\bf0.0$&$3.0$&$\bf0.0$&$3.9$&$\bf0.0$&$1.3$&$\bf1.3$\cr

I&$\bf0.0$&$\bf0.0$&$\bf0.0$&$\bf0.0$&$\bf0.0$&$\bf0.0$&$\bf0.0$&$\bf0.0$&$\bf0.0$&$\bf0.0$&$\bf0.0$&$\bf0.0$&$\bf0.0$&$\bf0.0$&$\bf0.0$&$\bf0.0$&$\bf2.6$&$\bf0.0$&$4.0$&$\bf0.0$&$1.3$&$2.8$\cr

J&$\bf0.0$&$\bf0.0$&$\bf0.0$&$\bf0.0$&$\bf0.0$&$\bf0.0$&$\bf0.0$&$\bf0.0$&$\bf0.0$&$\bf0.0$&$\bf0.0$&$\bf0.0$&$5.2$&$\bf0.0$&$\bf0.0$&$2.0$&$8.0$&$\bf0.0$&$\bf2.0$&$1.9$&$3.7$&$7.5$\cr

K&$\bf0.0$&$\bf0.0$&$\bf0.0$&$\bf0.0$&$\bf0.0$&$\bf0.0$&$\bf0.0$&$\bf0.0$&$\bf0.0$&$\bf0.0$&$\bf0.0$&$\bf0.0$&$\bf0.0$&$\bf0.0$&$\bf0.0$&$\bf0.0$&$10.0$&$\bf0.0$&$3.8$&$\bf0.0$&$\bf0.0$&$1.7$\cr

 \hline\hline
\end{tabular}
}
\end{center}
\end{table*}

\newpage

\subsection{Baseline Implementations}
The baselines are implemented via the codes provided from the following links:
\begin{itemize}
\item NOTEARS, NOTEARS-MLP: \href{https://github.com/xunzheng/notears}{https://github.com/xunzheng/notears}
\item NPVAR: \href{https://github.com/MingGao97/NPVAR}{https://github.com/MingGao97/NPVAR}
\item EQVAR, LISTEN: \href{https://github.com/WY-Chen/EqVarDAG}{ https://github.com/WY-Chen/EqVarDAG}
\item CORL: \href{https://github.com/huawei-noah/trustworthyAI/tree/master/gcastle}{https://github.com/huawei-noah/trustworthyAI/tree/master/gcastle}
\item DAG-GNN: \href{https://github.com/fishmoon1234/DAG-GNN}{https://github.com/fishmoon1234/DAG-GNN}
\item B-S \href{https://proceedings.neurips.cc/paper/2021/file/c0f6fb5d3a389de216345e490469145e-Supplemental.zip}{https://proceedings.neurips.cc/paper/2021/file/c0f6fb5d3a389de216345e490469145e-Supplemental.zip}
\end{itemize}


\end{document}